\definecolor{newcolor}{rgb}{.8,.349,.1}
\newcommand{\new}{}
\journal{Fuzzy Sets and Systems}
\newcommand{\reels}{\mathbb{R}}
\def\barcap{\overline{\cap}}
\newcommand{\calA}{{\cal A}}
\newcommand{\calF}{{\cal F}}
\newcommand{\calX}{{\cal X}}
\def\thetah{\widehat{\theta}}
\newcommand{\tF}{{\widetilde{F}}}
\newcommand{\tL}{{\widetilde{L}}}
\newcommand{\tG}{{\widetilde{G}}}
\newcommand{\tH}{{\widetilde{H}}}
\newcommand{\tK}{{\widetilde{K}}}
\newcommand{\tA}{{\widetilde{A}}}
\newcommand{\tB}{{\widetilde{B}}}
\newcommand{\tY}{{\widetilde{Y}}}
\newcommand{\tGamma}{{\widetilde{\Gamma}}}
\newcommand{\tTheta}{{\widetilde{\Theta}}}
\def\tmu{\widetilde{\mu}}
\newcommand{\tm}{{\widetilde{m}}}
\newcommand{\Int}{\textrm{Int}}
\newcommand{\Incl}{\textrm{Incl}}
\newcommand{\is}{\textrm{ is }}
\def\cvgd{\stackrel{d}{\longrightarrow}}
\def\btheta{\boldsymbol{\theta}}
\def\F{{\mathbb F}}
\def\bu{{\boldsymbol{u}}}
\def\bU{{\boldsymbol{U}}}
\def\cut#1#2{{}^#1#2}
\newcommand{\bi}{\begin{itemize}}
\newcommand{\ei}{\end{itemize}}
\newcommand{\be}{\begin{enumerate}}
\newcommand{\ee}{\end{enumerate}}
\newcommand{\bd}{\begin{description}}
\newcommand{\ed}{\end{description}}
\newtheorem{Prop}{Proposition}  
\newtheorem{Rem}{Remark}
\newtheorem{Ex}{Example}
\newtheorem{Req}{Requirement}
\begin{document}
\begin{frontmatter}

\title{Belief functions induced by random fuzzy sets:
\new{A general framework for representing uncertain and fuzzy evidence}}

\author[utc,shu,iuf]{Thierry Den{\oe}ux}
\ead{Thierry.Denoeux@utc.fr}
\address[utc]{Universit\'e de technologie de Compi\`egne, CNRS\\
UMR 7253 Heudiasyc, Compi\`egne, France}
\address[shu]{Shanghai University, UTSEUS, Shanghai, China}
\address[iuf]{Institut universitaire de France, Paris, France}

\begin{abstract}
We revisit Zadeh's notion of ``evidence of the second kind'' and show that it provides the foundation for a general theory of epistemic random fuzzy sets, which generalizes both the Dempster-Shafer theory of belief functions and possibility theory. In this perspective, Dempster-Shafer theory deals with belief functions generated by random sets, while possibility theory deals with belief functions induced by fuzzy sets. The more general theory allows us to represent and combine evidence that is both uncertain and fuzzy. We demonstrate the application of this formalism to statistical inference, and show that it makes it possible to reconcile the possibilistic interpretation of likelihood with Bayesian inference. 
\end{abstract}

\begin{keyword}
Dempster-Shafer theory, evidence theory, possibility theory, fuzzy mass functions, uncertain reasoning, likelihood, estimation, prediction.
\end{keyword}

\end{frontmatter}


\section{Introduction}
\label{sec:intro}

The theory of belief functions developed by Dempster \cite{dempster67a} and Shafer \cite{shafer76} is a theory of uncertain reasoning that puts the emphasis on  the concept of \emph{evidence}. It is based on the representation of elementary pieces of evidence by belief functions (defined as completely monotone set functions) and on their combination by an operator called the product-intersection rule, or Dempster's rule of combination. A belief function can be constructed by comparing a piece evidence to a scale of canonical examples such as randomly coded messages, whose meanings are determined by chance \cite{shafer81}. A belief function on a set $\Theta$ can be seen as being induced by a multi-valued mapping from a probability space to $\Omega$; it is mathematically equivalent to a random set \cite{dempster67a,nguyen78}. As rational beliefs are essentially determined by evidence, the Dempster-Shafer (DS) theory can be regarded as a general framework for reasoning with uncertainty \cite{denoeux20b}. 

Shortly after the introduction of DS theory, Zadeh independently proposed another formalism, called \emph{Possibility Theory} \cite{zadeh78}, in which the concept of ``fuzzy restriction''  plays a central role (see, e.g., \cite{denoeux20a} for a recent review). A fuzzy restriction is typically imposed on a variable $X$ taking values in a set $\Theta$ by a statement of the form ``$X$ is $\tF$'', where $\tF$ is a fuzzy subset of $\Theta$. For instance, the statement ``John is young'' acts as a flexible constraint on  the age of John. Zadeh proposed to identify the membership function of the fuzzy set $\tF$ with a \emph{possibility distribution}: for any $\theta\in \Theta$, $\tF(\theta)$ is then both the degree of membership of $\theta$ to the fuzzy set $\tF$, and the degree of possibility that $X$ takes value $\theta$, knowing that ``$X$ is $\tF$''. The possibility of a subset $A\subseteq \Theta$ is then the supremum of the degrees of possibility $\pi(\theta)$ for all $\theta\in \Theta$. 

The mathematical connections between the two theories were soon pointed out \cite{dubois82a}, namely: a possibility measure corresponds to a belief function induced by a multi-valued mapping that assigns probabilities only to a collection of nested subsets. Such a belief function is said to be \emph{consonant}. From this point of view, the formalism of belief functions is more general than that of possibility theory. However, Dempster's rule does not fit in possibility theory as it does not preserve consonance: the combination of two possibility measures by Dempster's rule is not a possibility measure. Possibility theory has its own conjunctive and disjunctive combination operators based on triangular norms and conorms \cite{dubois88a}:  consequently, it is not a special case of DS theory, but  a stand-alone framework. An open question is whether DS and possibility theories  should be regarded are two ``competing'' models of uncertainty, which can be used interchangeably to  reason with partial information, or whether they should rather be considered as complementary, each theory being suited to represent different kinds of uncertain evidence.  If one adopts the latter view, as I do in this paper, it makes sense to search for a more general theory that encompasses belief functions, possibility measures, and their combination operators as special cases. Interestingly, elements of such a theory already exist, but they have received relatively little attention until now.

The idea of combining DS and possibility theories can first be found in an early paper by Zadeh \cite{zadeh79}. Zadeh calls ``evidence of the second kind'' a pair $(X,\Pi_{(Y\mid X)})$ in which $X$ is a discrete random variable on a set $\Omega$ and  $\Pi_{(Y\mid X)}$ a collection of ``conditioned $\pi$-granules'', i.e., conditional possibility distributions of $Y$ given $X=x$, for all $x\in \Omega$. The probability distribution on $X$ and the conditional possibility distributions together define a \emph{mixture of possibility measures}. If the random variable $X$ is certain, then the mixture has only one component and  it boils down to a single possibility measure representing fuzzy evidence. If the conditional possibility distributions take values in $\{0,1\}$, we essentially get a  DS belief function, representing uncertain evidence. In general, Zadeh's concept of evidence of the second kind  makes it possible to represent evidence that is both fuzzy and uncertain, and it lays down the foundation of a general theory of uncertainty  encompassing possibility theory and DS theory as special cases. This theory, only sketched by Zadeh in \cite{zadeh79} and mostly overlooked in later work,  is further  explored in this paper.

The mathematical structure $(X,\Pi_{(Y\mid X)})$  is actually related to the notion of \emph{random fuzzy set} or \emph{fuzzy random variable}, a concept that also  appeared at the end of the 1970's \cite{feron76, kwakernaak78, kwakernaak79}. \new{Random fuzzy sets have been used with different interpretations that differ from Zadeh's evidence of the second kind, such as a model of a random mechanism for generating fuzzy data \cite{puri86,gil06}, or a representation of imprecise information about the true probability distribution associated with a random experiment \cite{kruse87,couso08}. Baudrit et al \cite{baudrit07} use random fuzzy sets to propagate probabilistic and possibilistic uncertainty in risk analysis from an imprecise probability perspective.  Couso and S\'anchez  \cite{couso11}  propose a framework based on a known probability measure on a  set $\Omega_1$ and a family of conditional possibility measures $\{\Pi(\cdot|\omega): \omega\in \Omega_1\}$ on a set $\Omega_2$, similar to Zadeh's ``conditioned $\pi$-granules''. Independently from the literature on fuzzy random variables,}   a few contributions on ``fuzzy belief structures'', \new{defined as DS mass functions with fuzzy focal sets,} were also published in the 1980's and early 1990's \cite{ishizuka82,yager82,yen90}. Applications to classification, regression and missing data imputation were presented, respectively, in \cite{denoeux01a},  \cite{petit99b,petit04} and \cite{petit98a}. However, in these papers, the formalism of fuzzy belief structures was seen merely as  a fuzzy generalization of DS mass functions, and not as a general concept encompassing DS and possibility theories as special cases, a perspective that is adopted in this paper.

A very important problem for which a general theory of epistemic random fuzzy sets can be useful is \emph{statistical inference}. In \cite{shafer76}, Shafer proposed to treat the relative likelihood function as the contour function or a consonant belief function. This view is quite appealing and it was further justified axiomatically in \cite{denoeux13b}. However, it was later rejected by Shafer \cite{shafer82} because it is not consistent with Dempster's rule of combination: the belief function induced by a joint random sample composed of two independent samples is not the orthogonal sum of the belief functions induced by each of the two samples considered separately. Rather,  relative likelihood functions induced by independent samples must be multiplied and renormalized, an operation that is consistent with a possibilistic interpretation of the likelihood  \cite{smets82,aickin00}. Yet, the limited expressiveness of possibility theory alone does not allow it to represent probability distributions and their combination with likelihoods to yield posterior distributions. The generalized setting proposed in this paper allows us to resolve these difficulties, by viewing the likelihood function as defining a mass function with a single fuzzy focal set. The resulting model still generalizes Bayesian inference as did Shafer's original model, while being consistent with Dempster's rule of combination. To highlight the basic principles without being distracted by mathematical intricacies,  we will assume the variables to be defined on finite domains throughout this paper. The case of  infinite domains will be addressed in a companion paper.

The rest of the paper is organized as follows. DS  and possibility  theories are first recalled in Section \ref{sec:background}, where some new definitions and results are also given. The more general  random fuzzy set model is then  exposed in Section \ref{sec:fuzzyDS}. Finally, the application to statistical inference is addressed in Section \ref{sec:stat}, and Section \ref{sec:concl}  concludes the paper.

\section{DS and possibility theories}
\label{sec:background}

In this section, we consider a variable $\btheta$ taking values in a finite set $\Theta$, and we review different classical models for representing and combining  evidence about $\btheta$. The case of \new{logical} evidence is first recalled in Section \ref{subsec:certain}. This simple case can be generalized  to \emph{uncertain} evidence, leading to DS theory, or to \emph{fuzzy} evidence, resulting in possibility theory. These two theories are reviewed, respectively, in  Sections \ref{subsec:DS} and  \ref{subsec:poss}. 

\subsection{\new{Logical} evidence}
\label{subsec:certain}

 Let $F$ be a subset of $\Theta$, and assume that we receive a piece of evidence  that us that $\btheta \in F$ for sure, and nothing more. \new{We call such evidence ``logical'' because it consists of a proposition that is known to be true.} Given this evidence, a proposition ``$\btheta \in A$'' for some $A\subseteq \Theta$ is \emph{possible} if and only if  $F\cap A\neq \emptyset$, and it is \emph{certain} if and only if $F\subseteq A$. We can define two set functions $\Pi_F$ and $N_F$ from the power set $2^\Theta$ to $\{0,1\}$, as
\begin{equation*}
\Pi_F(A):=I(F \cap A\neq \emptyset)
\end{equation*}
and
\[
N_F(F):=I(F \subseteq A),
\]
where $I(\cdot)$ is the indicator function, \new{which returns 1 if its argument is true, and 0 otherwise} \cite{dubois98b}. We can remark that $\Pi_F(A)=1$ if and only if there is some $\theta \in A$ that belongs to $F$. We can thus write, equivalently,
\begin{equation}
\Pi_F(A)= \max_{\theta\in A} F(\theta),
\end{equation}
where $F(\cdot)$ denotes the characteristic function of set $F$. The possibility value of $\{\theta\}$, denoted by $\pi_F(\theta)$, is 
\begin{equation}
\label{eq:Boolean_pi}
\pi_F(\theta):= \Pi_F(\{\theta\})=F(\theta).
\end{equation}
Furthermore, the proposition $F\subseteq A$ is equivalent to $F\cap A^c=\emptyset$, where $A^c$ denotes the complement of $A$. Consequently, we have
\begin{equation}
\label{eq:NPi}
N_F(A)=1-\Pi_F(A^c).
\end{equation}

It is clear that, for any two subsets $A$ and $B$ of $\Theta$, the following equalities hold:
\begin{equation}
\label{eq:Boolmax}
\Pi_F(A\cup B)=\Pi_F(A) \vee \Pi_F(B)
\end{equation}
and
\begin{equation}
\label{eq:Boolmin}
N_F(A\cap B)= N_F(A) \wedge N_F(B),
\end{equation}
where $\vee$ and $\wedge$ denote, respectively, the maximum  and  minimum operators. Functions $\Pi_F$ and $N_F$ can be called, respectively, \emph{Boolean possibility} and \emph{necessity} measures, and function $\pi_F: \Theta \rightarrow \{0,1\}$ can be called a \emph{Boolean possibility distribution}.

A subset $A\subseteq \Theta$ is possible if at least one element $\theta\in A$ is possible (i.e., belongs to $F$). This is captured by function $\Pi_F$. We can also define the stronger notion of \emph{guaranteed possibility} \cite{dubois98b}: $A\subseteq \Theta$ is ``guaranteed possible'' if \emph{all} its elements are possible, i.e., if $A \subseteq F$. This notion is captured by the guaranteed possibility function
\begin{equation}
\label{eq:Gpos}
\Delta_F(A):=I(A \subseteq F) =  \min_{\theta\in A} F(\theta).
\end{equation}
We can also define a dual ``potential certainty'' function, $\nabla_F(A)=1-\Delta_F(A^c)$, which equals one if and only if there is some $\theta$ outside $A$ that is not possible, i.e., if $A\cup F \neq \Theta$.

If we now have two pieces of evidence telling us that $\btheta\in F$ for sure and $\btheta\in G$ for sure, and if we consider that both sources can be trusted, then we can infer that $\btheta\in F\cap G$. The combined Boolean possibility distribution is, thus,
\begin{equation}
\pi_{F\cap G}(\theta):= \pi_F(\theta) \wedge  \pi_G(\theta).
\end{equation}
If, on the other hand, we only consider that at least one of the two sources can be trusted, then we can infer that  $\btheta\in F\cup G$, and  the combined Boolean possibility distribution is
\begin{equation}
\pi_{F\cup G}(\theta):= \pi_F(\theta) \vee  \pi_G(\theta).
\end{equation}

This simple model can be extended in two ways, by allowing the piece of information to be uncertain, or fuzzy. These two extensions correspond, respectively, to Dempster-Shafer (DS) and Possibility theories, which are recalled in   Sections \ref{subsec:DS} and \ref{subsec:poss} below. 

\subsection{Uncertain evidence: DS theory}
\label{subsec:DS}

Let us now assume that we receive a piece of evidence that can be interpreted in different ways, with given probabilities \cite{shafer76,denoeux20b}. Let $\Omega$ be the   set of interpretations, assumed to be finite. If interpretation $\omega\in \Omega$ holds, then the evidence tells us that $\btheta$ belongs to some nonempty subset $\Gamma(\omega) \subseteq \Theta$, and nothing more. We further assume that we can assess probabilities on $\Omega$, on which we define a probability measure $P$. The tuple $(\Omega,2^\Omega,P,\Gamma)$, where $\Gamma$ is a mapping from $\Omega$ to $2^\Theta$ is a \emph{random set}. We  define the corresponding \emph{mass function} $m: 2^\Theta$ to $[0,1]$ as
\[
m(A):=P\left(\{\omega \in \Omega : \Gamma(\omega)=A\}\right),
\]
for all nonempty subset $A\subseteq \Theta$, and $m(\emptyset)=0$. A subset $F$ such that $m(F)>0$ is called a \emph{focal set} of $m$. We denote  the focal sets by $F_1,\ldots,F_f$, and their masses by $m_i:=m(F_i)$ for $i=1,\ldots,f$. A mass function is said to be \emph{logical} if it has only one focal set, and \emph{Bayesian} if all of its focal sets are singletons.  

If interpretation $\omega$ holds, we know that $\btheta\in \Gamma(\omega)$. The possibility and necessity that $\btheta \in A$ are then, respectively, $\Pi_{\Gamma(\omega)}(A)$ and $N_{\Gamma(\omega)}(A)$. We can then compute the \emph{expected possibility} and the \emph{expected necessity} of the proposition ``$\btheta \in A$'' as, respectively,
\begin{equation}
\label{eq:defPl}
Pl_m(A) = \sum_{\omega \in \Omega} P(\{\omega\}) \Pi_{\Gamma(\omega)}(A) = \sum_{i=1}^f m_i \Pi_{F_i}(A) = \sum_{i=1}^f m_i I(F_i \cap A\neq \emptyset)
\end{equation}
and
\begin{equation}
\label{eq:defBel}
Bel_m(A) = \sum_{\omega \in \Omega} P(\{\omega\}) N_{\Gamma(\omega)}(A) =\sum_{i=1}^f m_i N_{F_i}(A)=\sum_{i=1}^f m_i I(F_i \subseteq  A).
\end{equation}
These two functions are called, respectively,  \emph{plausibility} and a \emph{belief}  functions. If $m$ has only one focal set $F$, it is clear that functions $Pl_m$ and $Bel_m$ boil down, respectively, to the Boolean possibility and necessity functions $\Pi_F$ and $N_F$ reviewed in Section \ref{subsec:certain}. In the general case, they are mixtures of such functions. The restriction of function $Pl_m$ to singletons, denoted as $pl_m(\theta):=Pl_m(\{\theta\})$, is called the\emph{ contour function} associated to $m$. It is equal to function $\pi_F$ when $m$ has only one focal set $F$.

Functions $Pl_m$ and $Bel_m$ are linked by the duality relation $Bel_m(A)=1-Pl_m(A^c)$, which is a direct consequence of  \eqref{eq:NPi}. Shafer \cite{shafer76} shows that a mapping $Bel: 2^\Theta \rightarrow [0,1]$ can be written in the form \eqref{eq:defBel} if and only if $Bel(\emptyset)=0$, $Bel(\Theta)=1$, and $Bel$ is completely monotone, i.e., 
\begin{equation}
\label{eq:monotone}
Bel\left( \bigcup_{i=1}^k A_i\right) \ge \sum_{\emptyset\neq I \subseteq \{1,\ldots,k\}} (-1)^{|I|+1} Bel\left( \bigcap_{i\in I} A_i \right)
\end{equation}
for any $k\ge 2$ and any collection $A_1,\ldots,A_k$ of subsets of $\Theta$. 

In addition to the expected possibility and necessity, we can also compute the\emph{ expected guaranteed possibility} of the proposition ``$\btheta \in A$''  as
\begin{equation}
\label{eq:defQ}
Q_m(A) = \sum_{\omega \in \Omega} P(\{\omega\}) \Delta_{\Gamma(\omega)}(A) =\sum_{i=1}^f m_i \Delta_{F_i}(A)=\sum_{i=1}^f m_i I(A \subseteq F_i).
\end{equation}
Function $Q_m$ is called the \emph{commonality function} induced by $m$ \cite{shafer76}; it plays an important role in DS theory, as will be shown below. The dual function $\nabla_m(A)= 1-Q_m(A^c)$ could also be defined, but its interpretation in DS theory is less clear. Functions $m$, $Bel_m$, $Pl_m$ and $Q_m$ are in one-to-one correspondence and any one of them allows us to recover the other three. They can thus be considered as different facets of the same information.

\paragraph{Dempster's rule} Let us now assume that we have two mass functions $m_1$ and $m_2$ on $\Theta$ induced by \emph{independent} pieces of evidence, i.e., by independent random sets $(\Omega_1,2^{\Omega_1},P_1,\Gamma_1)$ and $(\Omega_2,2^{\Omega_2},P_2,\Gamma_2)$. We further assume that both pieces of evidence are reliable, i.e., if the pair of interpretations $(\omega_1,\omega_2)\in \Omega_1\times \Omega_2$ holds, then we know for sure that $\btheta\in \Gamma_\cap(\omega_1,\omega_2):=\Gamma_1(\omega_1)\cap\Gamma_2(\omega_2)$, provided that $\Gamma_1(\omega_1)\cap\Gamma_2(\omega_2)\neq \emptyset$. To compute the probability that \new{a particular event in $\Omega_1\times\Omega_2$} holds, we need to compute the product measure $\new{P:=}P_1\otimes P_2$ and to condition it on the event 
\begin{equation}
\label{eq:Theta_star}
\Theta^*:=\{(\omega_1,\omega_2) \in \Omega_1\times \Omega_2 : \Gamma_\cap(\omega_1,\omega_2)\neq\emptyset\}.
\end{equation}
The \emph{orthogonal sum} \cite{shafer76} of $m_1$ and $m_2$ is then defined, for all $A\in 2^\Theta$, as
\begin{subequations}
\label{eq:dempster}
\begin{align}
(m_1\oplus m_2)(A)&:= P(\{(\omega_1,\omega_2)\in\Omega_1\times \Omega_2: \Gamma_\cap(\omega_1,\omega_2)=A\} \mid  \Theta^*)\\
&= \frac{P(\{(\omega_1,\omega_2)\in\Omega_1\times \Omega_2: \Gamma_\cap(\omega_1,\omega_2)=A\} \cap \Theta^*)}{P(\Theta^*)}\\
&=\begin{dcases}
0 &  \text{if } A=\emptyset\\
\frac{\sum_{F\cap G=A} m_1(F) m_2(G)}{\sum_{F\cap G\neq \emptyset} m_1(F) m_2(G)} & \text{otherwise, }
\end{dcases}
\end{align}
\end{subequations}
which is well defined on the condition that $P(\Theta^*)>0$. The binary operation $\oplus$ is called \emph{Dempster's rule}. It is commutative and associative. The quantity
\[
\kappa:=1- P(\Theta^*)=\sum_{F\cap G= \emptyset} m_1(F) m_2(G)
\]
is called the \emph{degree of conflict} between $m_1$ and $m_2$. 

The following two propositions are important in practice \cite{shafer76}.

\begin{Prop}
\label{prop:combQ}
Let $m_1$ and $m_2$ be two mass functions on $\Theta$. The commonality function $Q_{m_1\oplus m_2}$ corresponding to the orthogonal sum $m_1\oplus m_2$ is proportional to the product of the commonality functions $Q_{m_1}$ and $Q_{m_2}$ corresponding to $m_1$ and $m_2$:
\[
Q_{m_1\oplus m_2}= \frac{1}{1-\kappa} Q_{m_1} Q_{m_2}.
\]
As $Q_m(\{\theta\})=pl_m(\theta)$ for all $\theta\in\Theta$, we also have $pl_{m_1\oplus m_2}= (1-\kappa)^{-1} pl_{m_1} pl_{m_2}$.
\end{Prop}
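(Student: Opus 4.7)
The plan is to unfold the commonality function directly in terms of the mass function, substitute the orthogonal sum, and then reorganize the double sum so that the independence of the two constraints on $F$ and $G$ becomes visible.

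First, I would recall that for any mass function $m$, equation \eqref{eq:defQ} gives $Q_m(A)=\sum_{B\supseteq A} m(B)$, since $I(A\subseteq F_i)=1$ iff $F_i\supseteq A$. Applying this to $m_1\oplus m_2$ and substituting the closed form in \eqref{eq:dempster}, I get, for any nonempty $A\subseteq\Theta$,
\begin{equation*}
Q_{m_1\oplus m_2}(A) \;=\; \sum_{B\supseteq A}(m_1\oplus m_2)(B) \;=\; \frac{1}{1-\kappa}\sum_{B\supseteq A}\;\sum_{F\cap G=B} m_1(F)\,m_2(G),
\end{equation*}
where I have used $P(\Theta^*)=1-\kappa$. (The case $B=\emptyset$ contributes nothing because $A\neq\emptyset$.)

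Next, I would interchange the order of summation and use the key set-theoretic equivalence
\begin{equation*}
F\cap G \supseteq A \;\Longleftrightarrow\; F\supseteq A \text{ and } G\supseteq A.
\end{equation*}
The inner double sum thus collapses: every pair $(F,G)$ with $F\supseteq A$ and $G\supseteq A$ is counted exactly once (namely for $B=F\cap G$), so
\begin{equation*}
\sum_{B\supseteq A}\;\sum_{F\cap G=B} m_1(F)\,m_2(G)
\;=\; \sum_{F\supseteq A}\,\sum_{G\supseteq A} m_1(F)\,m_2(G)
\;=\; Q_{m_1}(A)\,Q_{m_2}(A).
\end{equation*}
Dividing by $1-\kappa$ yields the claimed identity.

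The second assertion then follows immediately: specializing to $A=\{\theta\}$ gives $Q_m(\{\theta\})=\sum_{F\ni\theta} m(F)=pl_m(\theta)$ (compare with \eqref{eq:defPl}), so $pl_{m_1\oplus m_2}(\theta)=(1-\kappa)^{-1}pl_{m_1}(\theta)\,pl_{m_2}(\theta)$. There is no real obstacle here; the only step that requires a moment's care is the interchange of sums and the verification that every admissible pair $(F,G)$ is counted exactly once. Well-definedness is guaranteed by the standing assumption $P(\Theta^*)>0$, i.e., $\kappa<1$.
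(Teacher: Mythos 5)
Your proof is correct. The paper does not prove Proposition \ref{prop:combQ} at all --- it is stated as a known result with a citation to Shafer --- and your argument is the standard one: rewrite $Q_m(A)=\sum_{B\supseteq A}m(B)$, substitute Dempster's rule, and use the equivalence $F\cap G\supseteq A \iff (F\supseteq A \text{ and } G\supseteq A)$ to factor the double sum. Your explicit restriction to nonempty $A$ is also the right reading of the statement, since the identity fails at $A=\emptyset$ (where $Q(\emptyset)=1$ on both sides of the mass normalization but the right-hand side would give $1/(1-\kappa)$).
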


\begin{Prop}
\label{prop:combBayes}
Let $m_1$ be a mass function and let $m_2$ be a Bayesian mass function. Then, the mass function $m_1\oplus m_2$ is Bayesian; it is defined as
\[
(m_1\oplus m_2)(\{\theta\})=\frac{pl_{m_1}(\theta) m_2(\{\theta\})}{\sum_{\theta'\in\Theta}pl_{m_1}(\theta') m_2(\{\theta'\})}
\] 
for all $\theta\in\Theta$.
\end{Prop}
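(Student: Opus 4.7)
The plan is a direct calculation from the definition of Dempster's rule in \eqref{eq:dempster}, exploiting the assumption that $m_2$ is Bayesian so that its focal sets are singletons.

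First I would note that since $m_2(G)>0$ only when $G=\{\theta'\}$ for some $\theta'\in\Theta$, the intersection $F\cap G$ appearing in \eqref{eq:dempster} is either empty or equal to the singleton $\{\theta'\}$. Consequently, the numerator $\sum_{F\cap G=A} m_1(F)m_2(G)$ vanishes whenever $A$ is not a singleton, which already shows that $m_1\oplus m_2$ is Bayesian (focal sets are singletons).

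Next I would evaluate the numerator at $A=\{\theta\}$. The pairs $(F,G)$ contributing are exactly those with $G=\{\theta\}$ and $\theta\in F$, so
\[
\sum_{F\cap G=\{\theta\}} m_1(F) m_2(G) = m_2(\{\theta\})\sum_{F\ni\theta} m_1(F) = m_2(\{\theta\})\, pl_{m_1}(\theta),
\]
using the definition of the contour function $pl_{m_1}(\theta)=Pl_{m_1}(\{\theta\})=\sum_{F\ni\theta} m_1(F)$ from \eqref{eq:defPl}. The denominator is handled similarly by partitioning according to the singleton $G=\{\theta'\}$:
\[
\sum_{F\cap G\neq\emptyset} m_1(F) m_2(G) = \sum_{\theta'\in\Theta} m_2(\{\theta'\}) \sum_{F\ni\theta'} m_1(F) = \sum_{\theta'\in\Theta} pl_{m_1}(\theta')\, m_2(\{\theta'\}).
\]
Dividing these two expressions yields the claimed formula.

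There is no real obstacle; the only subtlety is checking that the denominator is nonzero so that Dempster's rule is well defined in this setting, which amounts to the assumption $P(\Theta^*)>0$ that is already built into the definition \eqref{eq:dempster}. I would briefly remark on this at the end of the proof.
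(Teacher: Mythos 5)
Your proof is correct and complete: the observation that every nonempty intersection $F\cap G$ with $G$ a singleton is itself a singleton gives Bayesianity, and the two sums are evaluated correctly using $pl_{m_1}(\theta)=\sum_{F\ni\theta}m_1(F)$. The paper itself states this proposition without proof (citing Shafer), and your direct computation from \eqref{eq:dempster} is exactly the standard argument, including the appropriate remark that well-definedness rests on $P(\Theta^*)>0$.
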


As mentioned above, Dempster's rule is based on the assumption that both pieces of evidence are reliable. We can also weaken this assumption  and only assume that \emph{at least one of them} is reliable \cite{dubois86a,smets93b}. If the pair of interpretations $(\omega_1,\omega_2)\in \Omega_1\times \Omega_2$ holds,  we can then deduce that $\btheta\in \Gamma_\cup(\omega_1,\omega_2):=\Gamma_1(\omega_1)\cup\Gamma_2(\omega_2)$. Still assuming the two pieces of evidence to be independent, we get the combined mass function $m_1\Cup m_2$ defined,  for all $A\in 2^\Theta$, as
\begin{subequations}
\label{eq:disj_rule}
\begin{align}
(m_1\Cup m_2)(A)&:= P(\{(\omega_1,\omega_2)\in\Omega_1\times \Omega_2: \Gamma_\cup(\omega_1,\omega_2)=A\})\\
&=\sum_{F\cup G=A} m_1(F) m_2(G).
\end{align}
\end{subequations}
We note that  normalization is not needed in this case, as $\Gamma_\cup(\omega_1,\omega_2)$ cannot be empty as long as $\Gamma_1(\omega_1)$ and $\Gamma_2(\omega_2)$ are nonempty.

\paragraph{Degree of belief in a fuzzy event}

The notions of a fuzzy event and its probability were first defined by Zadeh \cite{zadeh68}. 
Given a probability space $(\Theta,\calA,P)$, a \emph{fuzzy event} is a fuzzy subset $\tA$ of $\Theta$ with measurable membership function, and the probability of $\tA$ is defined as the expectation of its membership function. When $\Theta$ is finite, as assumed throughout his paper, \new{and $\calA=2^\Theta$, the measurability of $\tA$ is always satisfied} and we have
\begin{equation}
\label{eq:probaFuzzy}
P(\tA):=\sum_{\theta\in\Theta} P(\{\theta\}) \tA(\theta),
\end{equation}
where $\tA(\theta)$ denotes the degree of membership of $\theta$ in the fuzzy set $\tA$. It can be shown \cite{dubois00c} that $P(\tA)$ can also be written as:
\begin{equation}
\label{eq:probaFuzzy1}
P(\tA)=\int_0^1 P(\cut\alpha\tA)d\alpha,
\end{equation}
where ${}^\alpha\tA=\{\theta\in \Theta : \tA(\theta)\ge \alpha\}$ is the $\alpha$-cut of $\tA$.

Smets \cite{smets81} extended this definition to the case where uncertainty on $\Theta$ is defined by a mass function $m$. He defined the \emph{degrees of belief and plausibility} of  fuzzy event $\tA$ as, respectively, the lower and upper expectations of its membership function:
\begin{subequations}
\label{eq:fuzzyBel}
\begin{align}
Bel_m(\tA)&=\sum_{i=1}^f m_i \min_{\theta\in F_i} \tA(\theta)\\
Pl_m(\tA)&=\sum_{i=1}^f m_i \max_{\theta\in F_i} \tA(\theta).
\end{align}
\end{subequations}
Similarly to \eqref{eq:probaFuzzy1}, we have
\begin{subequations}
\label{eq:BelPlChoquet}
\begin{align}
Bel_m(\tA)&=\int_0^1 Bel_m(\cut\alpha\tA)d\alpha\\
Pl_m(\tA)&=\int_0^1 Pl_m(\cut\alpha\tA)d\alpha.
\end{align}
\end{subequations}
We note that $Bel_m(\tA)$ and $Pl_m(\tA)$ are the Choquet integrals of $\tA(\cdot)$ with respect to \new{$Bel_m$ and $Pl_m$}, respectively. It is clear that definition \eqref{eq:fuzzyBel} coincides with \eqref{eq:probaFuzzy} when $m$ is Bayesian.  Let $\calF(\Theta)$ be the set of all fuzzy subsets of $\Theta$. It becomes a lattice when equipped with fuzzy intersection and union defined, respectively, as 
\begin{subequations}
\begin{equation}
\label{eq:fuzzyinter}
(\tA\wedge\tB)(\theta)=\tA(\theta) \wedge \tB(\theta)
\end{equation}
and
\begin{equation}
(\tA\vee\tB)(\theta)=\tA(\theta) \vee \tB(\theta)
\end{equation}
\end{subequations}
for all $\theta\in \Theta$, where $\vee$ and $\wedge$ denote, respectively, the minimum and the maximum.  As shown by Smets \cite{smets81}, the mapping $Bel$ from $\calF(\Theta)$ to $[0,1]$ defined by \eqref{eq:fuzzyBel} is a belief function on $(\calF(\Theta),\wedge,\vee)$, i.e., it verifies the following inequalities for  any $k\ge 2$ and countable collection  $\tA_1,\ldots,\tA_k$ of fuzzy subsets of $\Theta$:
\begin{equation}
\label{eq:monotone1}
Bel_m\left( \bigvee_{i=1}^k \tA_i\right) \ge \sum_{\emptyset\neq I \subseteq \{1,\ldots,k\}} (-1)^{|I|+1} Bel_m\left( \bigwedge_{i\in I} \tA_i \right).
\end{equation}

Although Smets did not consider extending the commonality function to fuzzy events, this can also be done as follows:
\begin{subequations}
\label{eq:mQfuzzy}
\begin{align}
Q_m(\tA)&:=\int_0^1 Q(\cut\alpha\tA)d\alpha\\
&=\int_0^1 \left(\sum_{i=1}^f m_i I(\cut\alpha\tA \subseteq F_i)\right)d\alpha\\
&= \sum_{i=1}^f m_i \int_0^1 I(\cut\alpha\tA \subseteq F_i)d\alpha\\
&= \sum_{i=1}^f m_i \left(1-\max_{\theta\not\in F_i} \tA(\theta)\right).
\end{align}
\end{subequations}

\subsection{Fuzzy evidence: possibility theory}  
\label{subsec:poss}

As recalled in Section \ref{sec:intro}, possibility theory introduced by Zadeh \cite{zadeh78} extends the simple model outlined in Section \ref{subsec:certain} by considering statements  of the form ``$\btheta \is \tF$'', where $\tF$ si a normal fuzzy subset of $\Theta$, i.e., a fuzzy subset verifying $\tF(\theta)=1$ for some $\theta\in\Theta$ (see also \cite{dubois88a,dubois98b,denoeux20a}).  \new{Such fuzzy statements are good representations of evidence that can be fully trusted but that does not have a precise meaning because, e.g., it is conveyed through natural language. For instance, I may know from a fully reliable witness that ``John is old''.} Zadeh \cite{zadeh78} defines the \emph{possibility measure} $\Pi_\tF: 2^\Theta \rightarrow [0,1]$ induced by the piece of information ``$\btheta \is \tF$''  as
\begin{equation}
\label{eq:poss}
\Pi_\tF(A):=\max_{\theta\in A} \tF(\theta),
\end{equation}
which generalizes \eqref{eq:Boolean_pi}. The dual \emph{necessity measure} is defined as
\begin{equation}
\label{eq:nec}
N_\tF(A) := 1-\Pi_\tF(A^c)= \min_{\theta\not\in A} \left[1-\tF(\theta)\right]
\end{equation}
for all $A\subseteq \Theta$, which extends \eqref{eq:NPi}. The corresponding \emph{possibility distribution} is defined as
\[
\pi_\tF(\theta):=\Pi_\tF(\{\theta\})=\tF(\theta)
\]
for all $\theta\in \Theta$. It is, thus, numerically equal to $\tF$.

It can easy be seen that
\begin{subequations}
\label{eq:poss_nec1}
\begin{equation}
\label{eq:poss1}
\Pi_\tF(A)= \int_0^1 \Pi_{\cut\alpha\tF}(A)d\alpha
\end{equation}
and
\begin{equation}
\label{eq:nec1}
N_\tF(A)= \int_0^1 N_{\cut\alpha\tF}(A)d\alpha.
\end{equation}
\end{subequations}

Trivially, we have $\Pi_\tF(\emptyset)=N_\tF(\emptyset)=0$, $\Pi_\tF(\Theta)=N_\tF(\Theta)=1$, and  $N_\tF(A) \le \Pi_\tF(A)$ for all $A\subseteq \Theta$. Furthermore, the following equalities hold:
\begin{subequations}
\label{eq:poss_nec}
\begin{equation}
\label{eq:Poss_max}
\Pi_\tF(A\cup B)=\Pi_\tF(A) \vee \Pi_\tF(B)
\end{equation}
and 
\begin{equation}
N_\tF(A\cap B)=N_\tF(A) \wedge N_\tF(B)
\end{equation}
\end{subequations}
for all $A,B \subseteq \Theta$. These equalities generalize \eqref{eq:Boolmax} and \eqref{eq:Boolmin}.  

In addition to the possibility and necessity measures defined by \eqref{eq:poss_nec}, Dubois and Prade \cite{dubois98b} also generalize the notion of \emph{guaranteed possibility} \eqref{eq:Gpos} as
\begin{equation}
\label{eq:GP}
\Delta_\tF(A) := \min_{\theta\in A} \tF(\theta) =\int_0^1 \Delta_{\cut\alpha\tF}(A)d\alpha,
\end{equation}
and that of \emph{potential certainty} as
\begin{equation}
\label{eq:PC}
\nabla_\tF(A) := 1- \Delta_\tF(A^c)= \max_{\theta\not\in A} \left[1-\tF(\theta)\right]=\int_0^1 \nabla_{\cut\alpha\tF}(A)d\alpha.
\end{equation}
The quantity $\Delta_\tF(A)$ measures the extent to which \emph{all} values in $A$ are actually possible given the statement ``$\btheta \text{ is } \tF$'', while $\nabla_\tF(A)$ measures the extent to which \emph{at least one} value $\theta$ outside $A$ has a low degree of possibility. Clearly, the following equality holds for any subsets $A$ and $B$ of $\Theta$:
\begin{equation}
\label{eq:Delta_min}
\Delta_\tF(A \cup B) = \Delta_\tF(A) \wedge  \Delta_\tF(B).
\end{equation}

\paragraph{Combination of possibility measures} Let us now assume that  we receive two pieces of evidence  that tell us that ``$\btheta \is \tF$'' and ``$\btheta \is \tG$'', where $\tF$ and $\tG$ are two normal fuzzy subsets of $\Theta$. These fuzzy sets induce two possibility distributions   $\pi_\tF$ and $\pi_\tG$. How should they be combined? If both sources are assumed to be reliable, then it makes sense to infer that ``$\btheta \is \tF \cap \tG$'', where $\tF \cap \tG$ denotes the intersection of fuzzy sets $\tF$ and $\tG$. There are, however, two difficulties. First, there are several possible definitions of fuzzy set intersection. Zadeh \cite{zadeh65} defines the intersection of fuzzy sets $\tF$ and $\tG$ as
\[
(\tF\wedge\tG)(\theta)=\tF(\theta) \wedge \tG(\theta),
\]
but he also proposes the following alternative definition: 
\[
(\tF \cdot \tG)(\theta)=\tF(\theta) \cdot  \tG(\theta).
\]
It is clear that both definitions extend the usual set intersection. Later, these two definitions have been generalized as $(\tF\cap_\top\tG)(\theta):=\tF(\theta) \top \tG(\theta)$, where $\top$ is a triangular norm (or t-norm for short) \cite{dubois00a}.  The minimum is the largest t-norm. It is  consistent with the definition of inclusion as $\tF \subseteq \tG$ iff $\tF \le \tG$: $\tF\wedge\tG$ is then the largest fuzzy set included in $\tF$ and $\tG$. It is also idempotent, i.e., $\tF\wedge\tF=\tF$; consequently, there is no reinforcement effect when two pieces of evidence are identical, which makes minimum-intersection combination useful for combining evidence from dependent and possibly redundant sources. In contrast, product intersection has a reinforcement effect that is appropriate when the sources are assumed to be independent \cite[page 352]{dubois99}.

The second difficulty when combining possibility distributions conjunctively is that the intersection of two fuzzy sets $\tF$ and $\tG$ may not be normal, and some normalization step has to take place. The standard normalization procedure divides $\tF\cap \tG$ by its height (i.e., supremum). As a consequence, the associativity property is usually lost. However,  the normalized product intersection, defined as 
\begin{equation}
\label{eq:norm_prod}
\tF\varodot\tG := \frac{\tF \cdot \tG}{h(\tF \cdot  \tG)}, 
\end{equation}
is associative \cite{dubois99}. We give a simple proof of this well-known result below for completeness.

\begin{Prop}
\label{prop:product_intersection}
Let $\tF$, $\tG$ and $\tH$ be fuzzy subsets of $\Theta$, and let $\varodot$ denote the normalized intersection based on the product t-norm. Then,
\[
(\tF \varodot \tG) \varodot \tH = \tF \varodot (\tG\varodot \tH).
\]
\end{Prop}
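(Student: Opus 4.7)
My plan is to reduce both sides of the identity to the same explicit function by exploiting the associativity of ordinary multiplication on $[0,1]$.

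The key observation is that the unnormalized pointwise product $\pi(\theta):=\tF(\theta)\tG(\theta)\tH(\theta)$ is associative and commutative as a real-valued function. So I expect both $(\tF \varodot \tG) \varodot \tH$ and $\tF \varodot (\tG \varodot \tH)$ to equal $\pi/h(\pi)$, where $h(\cdot)$ denotes the height (supremum). The only subtlety is bookkeeping the normalization constants that get introduced and then cancelled.

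First I would unfold the left-hand side. By definition, $(\tF \varodot \tG)(\theta) = \tF(\theta)\tG(\theta)/h(\tF\cdot\tG)$. Multiplying pointwise by $\tH(\theta)$ gives
\[
\bigl((\tF \varodot \tG) \cdot \tH\bigr)(\theta) = \frac{\pi(\theta)}{h(\tF\cdot\tG)}.
\]
Since the constant $h(\tF\cdot\tG)$ does not depend on $\theta$, it factors out of the supremum, so the height of this product is $h(\pi)/h(\tF\cdot\tG)$. Normalizing, the $h(\tF\cdot\tG)$ factors cancel and we obtain $(\tF \varodot \tG) \varodot \tH = \pi/h(\pi)$. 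The symmetric computation on the right-hand side yields the same function $\pi/h(\pi)$, with $h(\tG \cdot \tH)$ playing the role of the intermediate normalizer that ultimately cancels.

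There is essentially no obstacle beyond this calculation; the only step that deserves a brief remark is the factoring of a positive constant out of the supremum, which requires $h(\tF \cdot \tG)>0$ (and likewise $h(\tG \cdot \tH)>0$) for the normalized intersections to be defined in the first place. Under this standing nondegeneracy assumption, the proof reduces to two short lines of algebra and a comparison with the analogous expansion of the right-hand side.
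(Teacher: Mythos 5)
Your proof is correct and follows essentially the same route as the paper: both arguments rest on the fact that a positive constant factors out of the height, so that each side reduces to $\tF\cdot\tG\cdot\tH/h(\tF\cdot\tG\cdot\tH)$. Your explicit remark that $h(\tF\cdot\tG)>0$ and $h(\tG\cdot\tH)>0$ are needed for the expressions to be defined is a small but welcome addition that the paper only addresses afterwards in the surrounding text.
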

\begin{proof}
The key property of the product-based intersection is that, for any $\alpha\ge 0$, $h((\alpha\tF) \cdot  \tG)=h(\tF \cdot (\alpha\tG))=\alpha h(\tF \cdot \tG)$. Using this property, we have
\[
(\tF \varodot \tG) \varodot \tH= \frac{\frac{\tF\cdot\tG}{h(\tF\tG)} \cdot \tH}{h\left(\frac{\tF\cdot\tG}{h(\tF\cdot\tG)} \cdot \tH\right)}=\frac{\tF\cdot\tG\cdot\tH/h(\tF\cdot\tG)}{h(\tF\cdot\tG\cdot\tH)/h(\tF\cdot\tG)}=\frac{\tF\cdot\tG\cdot\tH}{h(\tF\cdot\tG\cdot\tH)}.
\]
and, by the commutativity of $\varodot$, 
\[
\tF \varodot (\tG\varodot \tH)= (\tG\varodot \tH) \varodot \tF= \frac{\tF\cdot\tG\cdot\tH}{h(\tF\cdot\tG\cdot\tH)}.
\]
\end{proof}

\new{The normalized intersection \eqref{eq:norm_prod} is only defined if  $h(\tF \cdot  \tG)>0$. A value of  $h(\tF \cdot  \tG)$ close to zero signals conflicting  evidence. Several authors have warned agains the use of normalized intersection in this case (see, e.g., \cite[page 354]{dubois99}). Indeed, the division by a small number  in \eqref{eq:norm_prod} may result in high sensitivity of the combination to small changes in $\tF$ or $\tG$. Actually,  conflict in the evidence may sometimes lead us to questioning its reliability.} If we  only assume that at least one source is reliable,  we can only deduce that ``$\btheta \is \tF \cup \tG$'', where $\tF \cup \tG$ denotes the union of fuzzy sets $\tF$ and $\tG$. The union $\tF \cup \tG$ is usually defined as  $(\tF \cup_\bot  \tG)(\theta):=\tF(\theta) \bot \tG(\theta)$, where $\bot$ is a t-conorm. To each t-norm $\top$ corresponds a dual t-conorm $\bot$ defined as $u \bot v = 1-[(1-u) \top (1-v)]$. The t-conorms corresponding to the minimum and the product are, respectively, the maximum and the probabilistic sum $u \bot v = u + v -uv$.

\paragraph{Relation between possibility measures and belief functions} 

It can easily be shown  that the mapping $N_\tF: 2^\Omega \rightarrow [0,1]$  is completely monotone \eqref{eq:monotone},  i.e., it is a belief function, and $\Pi_\tF$ is the dual plausibility function \cite{dubois98b}. Actually,  a belief function $Bel_m$ and the associated plausibility function $Pl_m$ verify the equalities
\[
Bel_m(A\cap B)=Bel_m(A) \vee Bel_m(B)
\]
and
\[
Pl_m(A\cup B)=Pl_m(A) \vee Pl(_mB)
\]
iff the corresponding mass function $m$ is  \emph{consonant} \cite{shafer76}, i.e., if for any pair of focal sets $F_i$ and $F_j$, we have $F_i \subset F_j$ or $F_j \subset F_i$.  To each possibility distribution $\pi_\tF$ thus corresponds a unique consonant mass function $m_\tF$, which can be recovered as follows \cite{dubois82a}. Let $\theta_1,\ldots,\theta_q$ denote the elements of $\Theta$, assumed to be indexed in such a way that 
\[
1=\pi_\tF(\theta_1)\ge \pi_\tF(\theta_2) \ge \ldots \ge \pi(_\tF\theta_q).
\]
The corresponding mass function $m_\tF$ is given by
\begin{equation}
\label{eq:consonant}
m_\tF(\{\theta_1,\ldots,\theta_i\})=\pi_\tF(\theta_i)- \pi_\tF(\theta_{i+1})
\end{equation}
for $i=1,\ldots,q-1$, and $m_\tF(\Theta)=\pi_\tF(\theta_q)$. It can easily be checked that functions $Bel_{m_\tF}$,  $Pl_{m_\tF}$, $pl_{m_\tF}$ and $Q_{m_\tF}$ are equal, respectively, to $N_\tF$, $\Pi_\tF$, $\pi_\tF$ and $\Delta_\tF$.

However, the combination of two consonant mass functions by Dempster's rule is no longer consonant. To combine   two consonant belief functions $Bel_1$ and $Bel_2$ induced by independent sources, we must, therefore, consider the evidence on which they are based:
\bi
\item If they are based on fully reliable but vague (fuzzy) evidence such as $\btheta \is \tF$ and $\btheta \is \tG$, then they should be combined  by \new{a conjunctive operator of possibility theory; the normalized product intersection \eqref{eq:norm_prod} seems to be a good choice as it is associative};
\item If they are based on uncertain but crisp (nonfuzzy) evidence pointing to consonant focal sets, then the corresponding consonant mass functions should be combined by Dempster's rule \eqref{eq:dempster}.
\ei
The two combination mechanisms yield different results but, as a consequence of Proposition \ref{prop:combQ}, the contour functions are proportional, as  illustrated by the following example.

\begin{Ex}
\label{ex:consonant}
Let $\Theta=\{\theta_1,\theta_2,\theta_3,\theta_4\}$. Assume that we receive the following pieces of evidence from two independent and reliable sources: 
\bi
\item First piece of evidence: ``$\btheta$ is $\tF$'', with 
\[
\tF=\left\{\frac{\theta_1}{0.5}, \frac{\theta_2}{1}, \frac{\theta_3}{0.8}, \frac{\theta_4}{0.3}\right\};
\]
\item Second piece of evidence: ``$\btheta$ is $\tG$'', with 
\[
\tG=\left\{\frac{\theta_1}{0.3}, \frac{\theta_2}{0.7}, \frac{\theta_3}{1}, \frac{\theta_4}{0.2}\right\};
\]
\ei 
These pieces of evidence can be represented by possibility distributions $\pi_\tF$ and $\pi_\tG$ and combined by  \eqref{eq:norm_prod}, resulting in the combined possibility distribution $\pi_{\tF\varodot\tG}$, with  $\tF\varodot\tG$ equal to
\[
\tF\varodot\tG=\left\{\frac{\theta_1}{0.15/0.8}, \frac{\theta_2}{0.7/0.8}, \frac{\theta_3}{1}, \frac{\theta_4}{0.06/0.8}\right\}=\left\{\frac{\theta_1}{0.1875}, \frac{\theta_2}{0.875}, \frac{\theta_3}{1}, \frac{\theta_4}{0.075}\right\}.
\]
The consonant mass function $m_{\tF\varodot\tG}$ corresponding to $\pi_{\tF\varodot\tG}$ is
\begin{align*}
m_{\tF\varodot\tG}(\{\theta_3\}) &= 1-0.875=0.125\\
m_{\tF\varodot\tG}(\{\theta_2,\theta_3\}) &= 0.875-0.1875=0.6875\\
m_{\tF\varodot\tG}(\{\theta_1,\theta_2,\theta_3\}) &= 0.1875-0.075=0.1125\\
m_{\tF\varodot\tG}(\Theta) &=0.075.
\end{align*}
Consider now another situation in which we receive two pieces of evidence represented by the following consonant mass functions:
\begin{align*}
m_\tF(\{\theta_2\}) &= 1-0.8=0.2\\
m_\tF(\{\theta_2,\theta_3\}) &= 0.8-0.5=0.3\\
m_\tF(\{\theta_1,\theta_2,\theta_3\}) &= 0.5-0.3=0.2\\
m_\tF(\Theta) &=0.3,
\end{align*}
and
\begin{align*}
m_\tG(\{\theta_3\}) &= 1-0.7=0.3\\
m_\tG(\{\theta_2,\theta_3\}) &= 0.7-0.3=0.4\\
m_\tG(\{\theta_1,\theta_2,\theta_3\}) &= 0.3-0.2=0.1\\
m_\tG(\Theta) &=0.2.
\end{align*}
Mass functions $m_\tF$ and $m_\tG$ induce the same belief functions as, respectively, $\tF$ and $\tG$. Yet, they are combined differently by Dempster's rule. Their orthogonal sum $m_\tF\oplus m_\tG$ is
\begin{align*}
(m_\tF\oplus m_\tG)(\{\theta_3\}) &= 0.24/(1-0.06)\approx 0.255\\
(m_\tF\oplus m_\tG)(\{\theta_2\}) &= 0.14/(1-0.06)\approx0.149\\
(m_\tF\oplus m_\tG)(\{\theta_2,\theta_3\}) &= 0.41/(1-0.06)\approx 0.436\\
(m_\tF\oplus m_\tG)(\{\theta_1,\theta_2,\theta_3\}) &=  0.09/(1-0.06)\approx 0.0957\\
(m_\tF\oplus m_\tG)(\Theta) &=0.06/(1-0.06)\approx 0.0638,
\end{align*}
which is different from $m_{\tF\varodot\tG}$. In particular, $m_\tF\oplus m_\tG$ is not consonant. Its contour function of $m_\tF\oplus m_\tG$ is
\begin{align*}
pl_{m_\tF\oplus m_\tG}(\theta_1)&=(0.09+0.06)/(1-0.06)\approx 0.160\\
pl_{m_\tF\oplus m_\tG}(\theta_2)&=(0.14+0.41+0.09+0.06)/(1-0.06)\approx 0.745\\
pl_{m_\tF\oplus m_\tG}(\theta_3)&=(0.24+0.41+0.09+0.06)/(1-0.06)\approx 0.851\\
pl_{m_\tF\oplus m_\tG}(\theta_4)&=0.06/(1-0.06)\approx 0.0638.
\end{align*}
It can be checked that $\pi_{\tF\varodot\tG}$ and $pl_{m_\tF\oplus m_\tG}$ are proportional, with $\pi_{\tF\varodot\tG}/pl_{m_\tF\oplus m_\tG}=1.175$.

\end{Ex}

The above considerations show that it is important, in practice, to determine whether a piece of evidence should be represented by a possibility distribution or by a consonant mass function. It seems reasonable to use the former representation for  reliable but fuzzy evidence  such as conveyed, e.g., by natural language, as in the proposition ``John is tall''. More surprisingly, it appears that \emph{statistical evidence} should also be represented in that way, as will be shown in Section \ref{sec:stat}. In contrast, consonant evidence  usually arises when combining several elementary pieces of evidence, such as simple mass functions of the form $m_i(F_i)=p_i$, $m_i(\Theta)=1-p_i$ with $F_1\subseteq\ldots\subseteq F_f$. Such simple mass functions may be obtained by combining expert opinions or sensor readings telling us that $F_i\subseteq \Theta$ with confidence degree $p_i$.

\paragraph{Possibility and necessity of a fuzzy event}

In  \cite{zadeh78}, Zadeh  defines the possibility and the necessity of a fuzzy event $\tA \in \calF(\Theta)$ as
\begin{equation}
\label{eq:poss_fuzzy}
\Pi_\tF^{(S)}(\tA):= \max_{\theta\in\Theta} \; (\tA\wedge \tF)(\theta) = h(\tA \wedge \tF).
\end{equation}
and
\begin{subequations}
\label{eq:nec_fuzzy}
\begin{align}
\label{eq:NfromPoss}
N^{(S)}_\tF(\tA) &:= 1-\Pi^{(S)}_\tF(\tA^c)\\
 &= 1-\max_{\theta\in\Theta} \; [1-A(\theta)] \wedge \tF(\theta)\\
  &= \min_{\theta\in\Theta} \left\{1- [1-\tA(\theta)] \wedge \tF(\theta)\right\}\\
&= \min_{\theta\in\Theta} \tA(\theta) \vee [1-\tF(\theta)]\\
&= \min_{\theta\in\Theta} \; (\tA \vee \tF^c)(\theta).
\end{align}
\end{subequations}
We note that $\Pi_\tF^{(S)}(\tA)$ and $N_\tF^{(S)}(\tA)$ are Sugeno integrals of the mapping $\tA: \Theta \to [0,1]$ with respect to $\Pi_\tF$ and $N_\tF$, respectively \cite[Section 7.6.2]{dubois00a}. We can also remark that $\Int(\tA,\tF):=h(\tA \wedge \tF)$ can be seen as a \emph{degree of intersection} of $\tA$ and $\tF$, whereas $\Incl(\tF,\tA):=\min_{\theta\in\Theta} (\tA\vee \tF^c)(\theta)$ can be seen as a \emph{degree of inclusion} of $\tF$ in $\tA$.

It is easy to check that $\Pi_\tF^{(S)}$ and $N_\tF^{(S)}$ are still, respectively, possibility and necessity measures in the lattice $(\calF(\Theta),\wedge,\vee)$, as 
\begin{subequations}
\label{eq:Poss_max_fuzzy}
\begin{equation}
\label{eq:maxitive_fuzzy}
\Pi_\tF^{(S)}(\tA\vee \tB)=\Pi_\tF^{(S)}(\tA) \vee \Pi_\tF^{(S)}(\tB)
\end{equation}
 and 
\begin{equation}
N_\tF^{(S)}(\tA\wedge \tB)=N_\tF^{(S)}(\tA) \wedge N_\tF^{(S)}(\tB)
\end{equation} 
\end{subequations}
for all fuzzy sets $\tA$ and $\tB$, which generalize \eqref{eq:Boolmax}-\eqref{eq:Boolmin}. It is also easy to show that $N_\tF^{(S)}(\tA) \le \Pi_\tF^{(S)}(\tA)$ for all $\tA \in \calF(\Theta)$ \cite{dubois85a}. Dubois and Prade \cite{dubois85a} also show that $N_\tF^{(S)}$ is a belief function in the lattice $(\calF(\Theta),\wedge,\vee)$, and $\Pi_\tF^{(S)}$ is the dual plausibility function.

Dubois {\em et al.} review generalizations of \eqref{eq:poss_fuzzy}-\eqref{eq:nec_fuzzy} as well as alternative definitions  in \cite[Section 7.6]{dubois00a}. In particular, they remark that we can  identify $\Pi_\tF$ and $N_\tF$ to, respectively,  the plausibility function $Pl_{m_\tF}$ and the belief function $Bel_{m_\tF}$ induced by the  consonant mass function $m_\tF$  defined by \eqref{eq:consonant}. We can then define the possibility and necessity of a fuzzy event $\tA$ from \eqref{eq:fuzzyBel} and \eqref{eq:BelPlChoquet} by the Choquet integrals \new{of the mapping $\tilde A: \Theta \to [0,1]$ with respect to the non-additive functions $\Pi_\tF$ and $N_\tF$}:
\begin{subequations}
\label{eq:poss_nec_fuzzy_Choquet}
\begin{align}
\Pi_\tF^{(C)}(\tA) &:= \int_0^1 \Pi_\tF(\cut\alpha\tA)d\alpha=\sum_{A\subseteq \Theta} m_\tF(A)\max_{\theta\in A} \tA(\theta)=Pl_{m_\tF}(\tA)\\
\label{eq:nec_fuzzy_Choquet}
N_\tF^{(C)}(\tA) &:= \int_0^1 N_\tF(\cut\alpha\tA)d\alpha=\sum_{A\subseteq \Theta} m_\tF(A)\min_{\theta\in A} \tA(\theta)=Bel_{m_\tF}(\tA).
\end{align}
\end{subequations}
From \eqref{eq:poss_nec1}, we then have
\begin{subequations}
\label{eq:poss_nec_fuzzy_Choquet1}
\begin{equation}
\Pi_\tF^{(C)}(\tA) = \int_0^1 \left( \int_0^1\Pi_{\cut\beta\tF}(\cut\alpha\tA)d\beta \right)d\alpha= \int_0^1 \Pi_{\cut\beta\tF}(\tA)d\beta
\end{equation}
and
\begin{equation}
N_\tF^{(C)}(\tA) = \int_0^1 \left( \int_0^1 N_{\cut\beta\tF}(\cut\alpha\tA)d\beta \right)d\alpha= \int_0^1 N_{\cut\beta\tF}(\tA)d\beta,
\end{equation}
with
\begin{equation}
 \Pi_{\cut\beta\tF}(\tA)=\int_0^1 \Pi_{\cut\beta\tF}(\cut\alpha\tA)d\alpha=\max_{\theta\in \cut\beta\tF} \tA(\theta)=\max_{\{\theta: \tF(\theta)\ge \beta\}} \tA(\theta)
\end{equation}
and
\begin{equation}
N_{\cut\beta\tF}(\tA)=\int_0^1 N_{\cut\beta\tF}(\cut\alpha\tA)d\alpha=\min_{\theta\in \cut\beta\tF} \tA(\theta)=\min_{\{\theta: \tF(\theta)\ge \beta\}} \tA(\theta).
\end{equation}
\end{subequations}
As $N_\tF^{(S)}$, function $N_\tF^{(C)}$ defined by \eqref{eq:nec_fuzzy_Choquet} is a  belief function on the lattice $(\calF(\Theta),\wedge,\vee)$, and $\Pi_\tF^{(C)}$ is its dual plausibility function. However,  $\Pi_\tF^{(C)}$ and $N_\tF^{(C)}$ are no longer possibility and necessity measures, as they fail to satisfy the basic axioms \eqref{eq:Poss_max_fuzzy} \cite{dubois85a}.



The following example illustrates the difference between $\Pi_\tF^{(S)}$ and $\Pi_\tF^{(C)}$. 
\begin{Ex}
\label{ex:comb_poss}
Consider the fuzzy sets $\tF$ and $\tG$ of Example \ref{ex:consonant}. We have
\[
\Pi_\tF^{(S)}(\tG)=h(\tF\wedge\tG)=h\left(\left\{\frac{\theta_1}{0.3}, \frac{\theta_2}{0.7}, \frac{\theta_3}{0.8}, \frac{\theta_4}{0.2}\right\}\right)=0.8
\]
and
\[
N_\tF^{(S)}(\tG)=1-h(\tF\wedge\tG^c)=1-h\left(\left\{\frac{\theta_1}{0.5}, \frac{\theta_2}{0.3}, \frac{\theta_3}{0}, \frac{\theta_4}{0.3}\right\}\right)=1-0.5=0.5,
\]
but
\[
\Pi_\tF^{(C)}(\tG)=0.2\times 0.7 + 0.3\times 1 + 0.2\times 1+ 0.3\times 1=0.94.
\]
and
\[
N_\tF^{(C)}(\tG)=0.2\times 0.7 + 0.3\times 0.7 + 0.2\times 0.3+ 0.3\times 0.2=0.47.
\]
\end{Ex}
 
To conclude this section, we can remark that the guaranteed possibility function \eqref{eq:GP} can be extended to fuzzy events as well. It can easily be seen that, for crisp $A$,
\[
\min_{\theta \in A} \tF(\theta)=\min_{\theta\in\Theta} \; (A^c \vee \tF)(\theta)=\Incl(A,\tF).
\]
A natural definition for the guaranteed possibility of fuzzy event $\tA$, in the spirit of Zadeh's definitions \eqref{eq:poss_fuzzy}-\eqref{eq:nec_fuzzy} for the possibility and belief of a fuzzy event is, thus,
\begin{equation}
\label{eq:fGP}
\Delta_\tF^{(S)}(\tA):= \Incl(\tA,\tF)=\min_{\theta \in \Theta} \; (\tA^c \vee \tF)(\theta).
\end{equation}
As a generalization of \eqref{eq:Delta_min}, we have
\begin{equation}
\label{eq;Deltamin}
\Delta_\tF^{(S)}(\tA\vee\tB) = \Delta_\tF^{(S)}(\tA) \wedge \Delta_\tF^{(S)}(\tB).
\end{equation}
Alternatively, we can define the guaranteed possibility of fuzzy event $\tA$ from \eqref{eq:mQfuzzy} by the Choquet integral
\begin{equation}
\label{eq:fGP1}
\Delta_\tF^{(C)}(\tA):= \int_0^1 \Delta_\tF(\cut\alpha\tA)d\alpha= \sum_{A\subseteq \Theta} m_\tF(A)\left(1-\max_{\theta\not\in A} \tA(\theta)\right)
 d\alpha=Q_{m_\tF}(\tA).
\end{equation}
Unfortunately,  equality \eqref{eq;Deltamin} does not hold anymore with this alternative definition.

\begin{Ex}
Consider again the fuzzy sets $\tF$ and $\tG$ of Examples \ref{ex:consonant} and \ref{ex:comb_poss}. From
\[
\tG^c\vee\tF=\left\{\frac{\theta_1}{0.7}, \frac{\theta_2}{1}, \frac{\theta_3}{0.8}, \frac{\theta_4}{0.8}\right\},
\]
we have $\Delta_\tF^{(S)}(\tG)=0.7$, but
\[
\Delta_\tF^{(C)}(\tG)=0.2(1-1)+ 0.3(1-0.3)+0.2(1-0.2)+0.3=0.67.
\]
Now, let
\[
\tH=\left\{\frac{\theta_1}{1}, \frac{\theta_2}{0.6}, \frac{\theta_3}{0.}, \frac{\theta_4}{0.1}\right\}.
\]
We have
\[
\tH^c\vee\tF=\left\{\frac{\theta_1}{0.5}, \frac{\theta_2}{1}, \frac{\theta_3}{0.8}, \frac{\theta_4}{0.9}\right\},
\]
hence $\Delta_\tF^{(S)}(\tH)=0.5$, and
\[
(\tG\vee\tH)^c\vee\tF=\left\{\frac{\theta_1}{0.5}, \frac{\theta_2}{1}, \frac{\theta_3}{0.8}, \frac{\theta_4}{0.8}\right\},
\]
hence $\Delta_\tF^{(S)}(\tG\vee \tH)=0.5=\Delta_\tF^{(S)}(\tG) \wedge \Delta_\tF^{(S)}(\tH)$. But we have
\[
\Delta_\tF^{(C)}(\tH)=0.2(1-1)+ 0.3(1-1)+0.2(1-0.1)+0.3=0.48
\]
and
\begin{align*}
\Delta_\tF^{(C)}(\tG\vee \tH)&=0.2(1-1)+ 0.3(1-1)+0.2(1-0.2)+0.3=0.46\\
&  \neq \Delta_\tF^{(C)}(\tG) \wedge \Delta_\tF^{(C)}(\tH).
\end{align*}

\end{Ex}

\section{Uncertain and fuzzy information: fuzzy mass functions}
\label{sec:fuzzyDS}

To handle cases where one consonant belief function is based on fuzzy evidence and the other one on uncertain evidence, or to handle evidence that is \emph{both} uncertain and fuzzy, we need to generalize DS and possibility theories. Such a generalization  is exposed in this section. We first review the notion of fuzzy mass function (Section \ref{subsec:FBS}). In Section \ref{subsec:combFBS},  we propose a combination operator that extends both Dempster's rule \eqref{eq:dempster} and the normalized product \eqref{eq:norm_prod}; we also define extensions of the disjunctive operators reviewed in Section \ref{subsec:DS} and \ref{subsec:poss}. The degrees of belief and plausibility of fuzzy events are then defined in Section \ref{subsec:belplfuzzy}. 

\subsection{Fuzzy mass functions}
\label{subsec:FBS}

Following \cite{zadeh79}, let us  assume that we receive uncertain and fuzzy information, which can be modeled as follows. As in Section \ref{subsec:DS}, we assume that the  evidence can be interpreted in different ways, and the set of interpretations is denoted by $\Omega$. If interpretation $\omega\in \Omega$ holds, then we know for sure that  proposition ``$\btheta \is \tGamma(\omega)$'' is true, where $\tGamma(\omega)$ is a normal fuzzy subset of $\Theta$. Denoting by $\calF^*(\Theta)$ the set of all normal fuzzy subsets of $\Theta$, we thus have a mapping $\tGamma: \Omega \rightarrow \calF^*(\Theta)$. If, as before, we assume the existence of a probability measure $P$ on $(\Omega,2^\Omega)$, then the tuple $(\Omega,2^\Omega,P,\tGamma)$ is a \emph{random fuzzy set} \cite{feron76,kwakernaak78,kwakernaak79} (also called a ``fuzzy random variable'' when  $\Theta$ is $\reels^p$).  

Let $\tm$ be the mapping from $\calF^*(\Theta)$ to $[0,1]$ defined as
\[
\tm(\tF):=P(\{\omega\in \Omega : \tGamma(\omega)=\tF\}).
\]
Because $\Omega$ is assumed to be finite, there is only a finite number of fuzzy subsets $\tF$ such that $\tm(\tF)>0$,   called the (fuzzy) focal sets of $m$. The set of focal sets is denoted as $\F(m)=\{\tF_1,\ldots,\tF_f\}$. We also use the notation $m_i:=m(\tF_i)$, $i=1,\ldots,n$. Mapping $\tm$ is called a \emph{fuzzy mass function}\footnote{This notion should not be confused with that of \emph{fuzzy-valued mass function} introduced in \cite{denoeux00c}. A fuzz-valued mass function assigns fuzzy numbers to crisp focal sets, and can be interpreted as a fuzzy set of crisp mass functions. We could, of course, ``fuzzify'' both the masses and the focal sets; such a generalization will not be considered in this paper.}. The notion of fuzzy mass function extends that of DS mass function  recalled in Section \ref{subsec:DS}. The number $\tm(\tF_i)$ is interpreted as the degree with which the evidence supports  the proposition $\btheta \textrm{ is } \tF_i$, without supporting any more specific proposition. 

If interpretation $\omega$ holds, we know that ``$\btheta \is \tGamma(\omega)$''. The  possibility and necessity of a subset $A\subseteq \Theta$ are, respectively, $\Pi_{\tGamma(\omega)}(A)$ and $N_{\tGamma(\omega)}(A)$ defined by \eqref{eq:poss} and \eqref{eq:nec}. As we only know that interpretation $\omega$ holds with probability $P(\{\omega\})$, we can compute the expected possibility and the expected necessity \cite{zadeh79} as
\begin{equation}
\label{eq:defPl1}
Pl_\tm(A) = \sum_{\omega\in \Omega} P(\{\omega\}) \Pi_{\tGamma(\omega)}(A) = \sum_{i=1}^f m_i \Pi_{\tF_i}(A)=  \sum_{i=1}^f m_i \max_{\theta\in A} \tF_i(\theta)
\end{equation}
and
\begin{equation}
\label{eq:defBel1}
Bel_\tm(A) = \sum_{\omega\in \Omega} P(\{\omega\}) N_{\tGamma(\omega)}(A) =\sum_{i=1}^f m_i N_{\tF_i}(A)=\sum_{i=1}^f m_i \min_{\theta\not\in A}[1- \tF_i(\theta)].
\end{equation}

Functions $Pl_\tm$ and $Bel_\tm$ are, respectively, mixtures of possibility and necessity measures. As we have seen that each necessity measure $N_{\tF_i}$ is a belief function, and the set of belief functions is convex, $Bel_\tm$ is still a belief function, and $Pl_\tm$ (which verifies $Pl_\tm(A)=1-Bel_\tm(A^c)$ for all $A\subseteq\Theta$) is the dual plausibility function. The contour function of $\tm$ is equal to the mean of the membership functions of its focal sets:
\begin{equation}
\label{eq:contour_fuzzy}
pl_\tm(\theta)=\sum_{i=1}^f m_i \tF_i(\theta).
\end{equation}

We can also define the commonality of $A$ as its expected guaranteed possibility from \eqref{eq:GP}:
\begin{equation}
\label{eq:deQ1}
Q_\tm(A) = \sum_{\omega\in \Omega} P(\{\omega\}) \Delta_{\tGamma(\omega)}(A) =\sum_{i=1}^f m_i \Delta_{\tF_i}(A)=\sum_{i=1}^f m_i \min_{\theta\in A} \tF_i(\theta).
\end{equation}

A fuzzy mass function $\tm$  with focal sets $\tF_1,\ldots,\tF_f$ and masses $m_1,\ldots,m_f$ can also be described as a collection of standard (crisp) mass functions $\cut\alpha{\tm}$ with focal sets $\cut\alpha\tF_1,\ldots,\cut\alpha\tF_f$ and the same masses $m_1,\ldots,m_f$. Each mass function $\cut\alpha\tm$ is induced by the random set $(\Omega,2^\Omega,P,\cut\alpha\tGamma)$, with $\cut\alpha\tGamma(\omega)=\cut\alpha{[\tGamma(\omega)]}$. From \eqref{eq:poss_nec1} and \eqref{eq:GP}, we have
\begin{subequations}
\label{eq:alphacut}
\begin{align}
\label{eq:alphacutbel}
Bel_\tm(A) &= \int_0^1 Bel_{\cut\alpha\tm}(A) d\alpha,\\
\label{eq:alphacutpl}
Pl_\tm(A) &= \int_0^1 Pl_{\cut\alpha\tm}(A) d\alpha,
\end{align}
and
\begin{equation}
Q_\tm(A)=  \int_0^1 Q_{\cut\alpha\tm}(A) d\alpha
\end{equation}
\end{subequations}
for all $A\subseteq \Theta$. \new{We note that  equalities \eqref{eq:alphacutbel} and \eqref{eq:alphacutpl} are proved in \cite{couso11} in a more general setting.}

It is important to remark that the concept of fuzzy mass functions allows us to generalize both DS theory and possibility theory. In particular, a \emph{logical} fuzzy mass function $\tm$ with a single fuzzy focal set $\tF$ is equivalent to a possibility distribution $\pi=\tF$.

\begin{Rem}
In ``classical'' DS theory there is a one-to-one correspondence between (crisp) mass functions and belief functions. The fundamental reason is that, for a given belief function $Bel$, the system of linear equations $Bel(A)=\sum_{B\subseteq A} m(B)$ for all $\emptyset\neq A\subseteq\Omega$ has a unique solution. In the ``fuzzified'' version of DS theory, there is a many-to-one correspondence between fuzzy mass functions and belief functions, i.e., a given belief function can be obtained from  many fuzzy mass functions. The simplest example is that of a belief function verifying $Bel(A\cap B)=Bel(A)\wedge Bel(B)$ for all $A,B\subseteq\Theta$ (i.e., a necessity measure), which, as shown in Section \ref{subsec:poss}, can be induced both by a crisp consonant mass function and by a logical fuzzy mass function.
\end{Rem}

In the following section, we define a combination operator for fuzzy mass functions that extends both the normalized product of possibility distributions \eqref{eq:norm_prod} and Dempster's rule \eqref{eq:dempster}. 

\subsection{Combination of fuzzy mass functions}
\label{subsec:combFBS}

Let us now assume that we have two fuzzy mass functions $\tm_1$ and $\tm_2$ 
induced by independent random fuzzy sets $(\Omega_1,2^{\Omega_1},P_1,\tGamma_1)$ and $(\Omega_2,2^{\Omega_2},P_2,\tGamma_2)$. If interpretations $\omega_1\in \Omega_1$ and $\omega_2\in \Omega_2$ both hold, then we can infer the fuzzy proposition ``$\btheta \is \tGamma_{\cap_\top}(\omega_1,\omega_2)$'', with $\tGamma_{\cap_\top}(\omega_1,\omega_2):=\tGamma_1(\omega_1)\cap_\top \tGamma_2(\omega_2)$, where $\cap_\top$ is a fuzzy intersection operator based on a t-norm $\top$. However, the random fuzzy set $(\Omega_1\times\Omega_2, 2^{\Omega_1\times\Omega_2},P_1\otimes P_2,\tGamma_{\cap_\top})$ may not be normalized, as some fuzzy sets $\tGamma_{\cap_\top}(\omega_1,\omega_2)$ may not be normal. To obtain a normal random fuzzy set,  we can replace $\tGamma_{\cap_\top}$ by the mapping
\begin{equation}
\label{eq:Gamma_norm}
\tGamma_{\barcap_\top}(\omega_1,\omega_2):=\tGamma_1(\omega_1)\barcap_\top \tGamma_2(\omega_2)=\frac{\tGamma_1(\omega_1)\cap_\top \tGamma_2(\omega_2)}{h(\tGamma_1(\omega_1)\cap_\top \tGamma_2(\omega_2))},
\end{equation}
where $\barcap_\top$ denotes normalized $\top$-intersection, and we can condition $P_1\otimes P_2$ by the following fuzzy subset $\tTheta^*$ of $\Omega_1\times\Omega_2$, which is a natural generalization of $\Theta^*$ in \eqref{eq:Theta_star}:
\begin{equation}
\tTheta^*(\omega_1,\omega_2)=h(\tGamma_{\cap_\top}(\omega_1,\omega_2)), \quad \forall(\omega_1,\omega_2)\in \Omega_1\times\Omega_2.
\end{equation}
Using Zadeh's definition of the probability of a fuzzy event \eqref{eq:probaFuzzy}, the conditional probability $(P_1\otimes P_2)(\cdot \mid \tTheta^*)$ is
\begin{align*}
(P_1\otimes P_2)(B \mid \tTheta^*)&= \frac{(P_1\otimes P_2)(B \cap \tTheta^*)}{(P_1\otimes P_2)(\tTheta^*)}\\
&=\frac{\sum_{(\omega_1,\omega_2)\in B} P_1(\omega_1)P_2(\omega_2) h(\tGamma_{\cap_\top}(\omega_1,\omega_2))}{\sum_{(\omega_1,\omega_2)\in \Omega_1\times\Omega_2} P_1(\omega_1)P_2(\omega_2) h(\tGamma_{\cap_\top}(\omega_1,\omega_2))},
\end{align*}
for all $B\subseteq \Omega_1\times\Omega_2$. The fuzzy mass function generated by the random fuzzy set $(\Omega_1\times\Omega_2, 2^{\Omega_1\times\Omega_2},(P_1\otimes P_2)(\cdot\mid\tTheta^*),\tGamma'_{\cap_\top})$ is then
\begin{equation}
\label{eq:dempster_fuzzy1}
(\tm_1\Cap_\top \tm_2)(\tF):=\frac{ \sum_{\tG\barcap_\top \tH=\tF} h(\tG \cap_\top \tH) \tm_1(\tG)\tm_2(\tH)}{\sum_{(\tG,\tH)\in \F(\tm_1)\times \F(\tm_2)} h(\tG\cap_\top \tH) \tm_1(\tG)\tm_2(\tH)}
\end{equation}
for all $\tF\in \calF^*(\Theta)$. This operation was proposed by Yen \cite{yen90}, using $\top=\min$. The normalization operation in \eqref{eq:dempster_fuzzy1} was called  ``soft normalization'' by Yager \cite{yager96}. 

In general, the operation $\Cap_\top$ defined by \eqref{eq:dempster_fuzzy1}  is not associative. However, it is associative if we use the \emph{product t-norm}  for defining the fuzzy set intersection. In the following, we will thus  define the \emph{orthogonal sum} $\tm_1\oplus\tm_2$ of two fuzzy mass functions $\tm_1$ and $\tm_2$ (where the symbol $\oplus$ has the same meaning as $\Cap_\top$ with $\top=\text{product}$) as
\begin{equation}
\label{eq:dempster_fuzzy}
(\tm_1\oplus \tm_2)(\tF):=\frac{ \sum_{\tG \varodot\tH=\tF} h(\tG \cdot\tH) \tm_1(\tG)\tm_2(\tH)}{\sum_{(\tG,\tH)\in \F(\tm_1)\times \F(\tm_2)} h(\tG\cdot \tH) \tm_1(\tG)\tm_2(\tH)},
\end{equation}
for all $\tF\in \calF^*(\Theta)$. The operation defined by \eqref{eq:dempster_fuzzy} will be called the \emph{generalized product-intersection} rule. The denominator of the right-hand side of \eqref{eq:dempster_fuzzy} can be denoted as $1-\kappa$, where $\kappa$ can be interpreted as the \emph{degree of conflict} between fuzzy mass functions $\tm_1$ and $\tm_2$. It is clear that \eqref{eq:dempster_fuzzy} is a proper generalization of \eqref{eq:dempster}, i.e., it gives the same result when the focal sets of $\tm_1$ and $\tm_2$ are crisp. It also boils down to normalized intersection \eqref{eq:norm_prod} when combining two logical fuzzy mass functions $\tm_1$ and $\tm_2$ such that $\tm_1(\tF)=1$ and $\tm_2(\tG)=1$. As a consequence of Proposition  \ref{prop:product_intersection}, the generalized product-intersection rule is associative, as stated in the following proposition.

\begin{Prop}
\label{prop:dempster_assoc}
Let $m_1$, $m_2$ and $m_3$ be fuzzy mass functions of $\Theta$, and let $\oplus$ denote the generalized product-intersection  defined by \eqref{eq:dempster_fuzzy}. Then,
\[
(m_1 \oplus m_2) \oplus m_3 = m_1 \oplus (m_2\oplus m_3).
\]
\end{Prop}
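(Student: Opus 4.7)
The plan is to show that both $((\tm_1 \oplus \tm_2)\oplus \tm_3)(\tF)$ and $(\tm_1\oplus(\tm_2 \oplus \tm_3))(\tF)$ reduce to the same symmetric triple-sum expression over all triples $(\tF_1,\tF_2,\tF_3)\in \F(\tm_1)\times\F(\tm_2)\times\F(\tm_3)$ satisfying $\tF_1 \varodot \tF_2 \varodot \tF_3 = \tF$. The two ingredients that drive this are (i) the associativity of $\varodot$ already proved in Proposition \ref{prop:product_intersection}, so that iterated normalized products can be replaced by a single three-fold normalized product, and (ii) the scaling identity $h(\alpha \tF \cdot \tG) = \alpha h(\tF\cdot \tG)$, which was the key lemma in Proposition \ref{prop:product_intersection}.

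First I would expand the definition \eqref{eq:dempster_fuzzy} twice. Writing $D_{12}$ for the denominator of $\tm_1\oplus\tm_2$, the numerator of $((\tm_1\oplus\tm_2)\oplus \tm_3)(\tF)$ becomes
\[
\frac{1}{D_{12}}\sum_{\tK\varodot \tF_3=\tF} h(\tK\cdot \tF_3) \Bigl(\sum_{\tF_1\varodot \tF_2=\tK} h(\tF_1\cdot\tF_2)\tm_1(\tF_1)\tm_2(\tF_2)\Bigr)\tm_3(\tF_3).
\]
Substituting $\tK=(\tF_1\cdot\tF_2)/h(\tF_1\cdot\tF_2)$ in $\tK\cdot\tF_3$ and applying the scaling identity gives $h(\tK\cdot\tF_3)\,h(\tF_1\cdot\tF_2)=h(\tF_1\cdot\tF_2\cdot\tF_3)$, so the product of heights collapses to the single height $h(\tF_1\cdot\tF_2\cdot\tF_3)$. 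Meanwhile, associativity of $\varodot$ (Proposition \ref{prop:product_intersection}) identifies the constraint $\tK\varodot\tF_3=\tF$ with $(\tF_1\varodot\tF_2)\varodot\tF_3=\tF$, i.e., $\tF_1\varodot\tF_2\varodot\tF_3=\tF$. The numerator therefore equals
\[
\frac{1}{D_{12}}\sum_{\tF_1\varodot\tF_2\varodot\tF_3=\tF} h(\tF_1\cdot\tF_2\cdot\tF_3)\,\tm_1(\tF_1)\tm_2(\tF_2)\tm_3(\tF_3).
\]

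An analogous computation (the denominator sums over all triples without the constraint) shows that the denominator of $((\tm_1\oplus\tm_2)\oplus\tm_3)(\tF)$ equals
\[
\frac{1}{D_{12}}\sum_{(\tF_1,\tF_2,\tF_3)} h(\tF_1\cdot\tF_2\cdot\tF_3)\,\tm_1(\tF_1)\tm_2(\tF_2)\tm_3(\tF_3).
\]
The common factor $1/D_{12}$ cancels, leaving an expression that is manifestly symmetric in the three arguments. Running the same derivation for $(\tm_1\oplus(\tm_2\oplus\tm_3))(\tF)$ yields exactly the same ratio (with $D_{23}$ canceling in place of $D_{12}$), so the two iterated combinations coincide.

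The only delicate step is the bookkeeping in the first display: one has to verify that the normalization factor $h(\tF_1\cdot\tF_2)$ sitting in the inner sum exactly cancels the $h(\tF_1\cdot\tF_2)$ sitting in the denominator of $\tK$ when computing $h(\tK\cdot\tF_3)$. This is precisely where the scaling identity $h(\alpha\tF\cdot\tG)=\alpha h(\tF\cdot\tG)$ is needed, and it is the same mechanism already exploited in Proposition \ref{prop:product_intersection}. Once this cancellation is established, associativity of $\oplus$ follows from the associativity of pointwise product and of $\varodot$, with no further combinatorial obstacle.
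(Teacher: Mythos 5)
Your proposal is correct and follows essentially the same route as the paper's proof: expand the iterated combination, use the scaling identity $h(\alpha\tF\cdot\tG)=\alpha h(\tF\cdot\tG)$ to collapse $h(\tF_1\cdot\tF_2)\,h((\tF_1\varodot\tF_2)\cdot\tF_3)$ into $h(\tF_1\cdot\tF_2\cdot\tF_3)$, and reduce both groupings to the same symmetric triple-sum ratio. Your explicit remark that the outer normalization constant $D_{12}$ cancels between numerator and denominator is a point the paper handles only implicitly, but the argument is the same.
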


\begin{proof}
We have
\begin{align*}
[(m_1 \oplus m_2) \oplus m_3](\tA)&= \frac{\sum_{\tH\varodot \tK=\tA} (m_1\oplus m_2)(\tH) m_3(\tK) h(\tH\cdot\tK)}{\sum_{\tH, \tK} (m_1\oplus m_2)(\tH) m_3(\tK) h(\tH\cdot \tK)}\\
&= \frac{\sum_{\tH\varodot \tK=\tA} \left[
\frac{\sum_{\tF\varodot \tG=\tH} m_1(\tF)m_2(\tG)h(\tF\cdot \tG)}{\sum_{\tF,\tG} m_1(\tF)m_2(\tG)} \right]
m_3(\tK) h(\tH\cdot\tK)}
{\sum_{\tH, \tK} \left[
\frac{\sum_{\tF\varodot \tG=\tH} m_1(\tF)m_2(\tG)h(\tF\cdot\tG)}{\sum_{\tF,\tG} m_1(\tF)m_2(\tG) } \right] m_3(\tK) h(\tH\tK)}\\
&=\frac{\sum_{\tF\varodot \tG \varodot \tK=\tA} m_1(\tF)m_2(\tG)m_3(\tH) h(\tF\cdot \tG)h((\tF\varodot \tG)\cdot \tK)}{\sum_{\tF, \tG, \tK} m_1(\tF)m_2(\tG)m_3(\tH) h(\tF\cdot \tG)h((\tF\varodot \tG)\cdot \tK)}.
\end{align*}
Now, 
\[
h(\tF\cdot \tG)h((\tF\varodot \tG)\cdot \tK)=h(\tF\cdot\tG)h\left(\frac{\tF\cdot\tG}{h(\tF\cdot\tG)}\cdot \tK\right)=h(\tF\cdot\tG\cdot\tK).
\]
Hence,
\begin{equation}
\label{eq:proof_dempster}
[(m_1 \oplus m_2) \oplus m_3](\tA)=\frac{\sum_{\tF\varodot \tG \varodot \tK=\tA} m_1(\tF)m_2(\tG)m_3(\tH) h(\tF\cdot\tG\cdot\tK)}{\sum_{\tF, \tG, \tK} m_1(\tF)m_2(\tG)m_3(\tH) h(\tF\cdot\tG\cdot\tK)}.
\end{equation}
Consequently, we can permute the indices in \eqref{eq:proof_dempster} and write
\[
(m_1 \oplus m_2) \oplus m_3=(m_2 \oplus m_3) \oplus m_1=m_1 \oplus (m_2 \oplus m_3).
\]
\end{proof}

\begin{Ex}
Continuing Example \ref{ex:consonant}, assume that we have the following two fuzzy mass functions:
\[
m_1(\tF):=0.6, \quad m_1(\Theta):=0.4
\] 
and
\[
m_2(\tG):=0.7, \quad m_2(\Theta):=0.3,
\] 
where $\tF$ and $\tG$ are defined as in Example \ref{ex:consonant}. As $h(\tF\cdot\tG)=0.8$, we have
\begin{align*}
(m_1\oplus m_2)(\tF\varodot\tG)&=\frac{0.6\times 0.7\times 0.8}{0.6\times 0.7\times 0.8 + 0.6\times 0.3 + 0.4\times 0.7 + 0.4\times 0.3}\approx 0.37\\
(m_1\oplus m_2)(\tF)&=\frac{0.6\times 0.3}{0.6\times 0.7\times 0.8 + 0.6\times 0.3 + 0.4\times 0.7 + 0.4\times 0.3}\approx 0.20\\
(m_1\oplus m_2)(\tG)&=\frac{0.7\times 0.4}{0.6\times 0.7\times 0.8 + 0.6\times 0.3 + 0.4\times 0.7 + 0.4\times 0.3}\approx 0.31\\
(m_1\oplus m_2)(\Theta)&=\frac{0.3\times 0.4}{0.6\times 0.7\times 0.8 + 0.6\times 0.3 + 0.4\times 0.7 + 0.4\times. 0.3}\approx 0.13.
\end{align*}
\end{Ex}

The following propositions generalize Propositions \ref{prop:combQ} and \ref{prop:combBayes}.

\begin{Prop}
\label{prop:contour_fuzzy}
Let $\tm_1$ and $\tm_2$ be two fuzzy mass functions with contour functions $pl_{\tm_1}$ and $pl_{\tm_2}$. Then, the contour function of $\tm_1\oplus\tm_2$ is given by
\[
pl_{\tm_1\oplus\tm_2}=\frac{pl_{\tm_1}pl_{\tm_2}}{1-\kappa}
\]
with
\[
\kappa=1-\sum_{(\tG,\tH)\in \F(\tm_1)\times \F(\tm_2)} h(\tG\cdot \tH) \tm_1(\tG)\tm_2(\tH).
\]
\end{Prop}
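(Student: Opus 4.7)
The plan is to expand the contour function directly from its definition~\eqref{eq:contour_fuzzy} applied to the combined fuzzy mass function $\tm_1\oplus \tm_2$, and then reorganise the sums so that the normalisation factors in the definition of $\varodot$ cancel the weights $h(\tG\cdot\tH)$ appearing in~\eqref{eq:dempster_fuzzy}.

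Concretely, I would start from
\[
pl_{\tm_1\oplus\tm_2}(\theta)=\sum_{\tF\in\calF^*(\Theta)} (\tm_1\oplus\tm_2)(\tF)\,\tF(\theta)
\]
and substitute~\eqref{eq:dempster_fuzzy} for $(\tm_1\oplus\tm_2)(\tF)$. The outer sum over $\tF$ together with the constrained inner sum over pairs $(\tG,\tH)$ with $\tG\varodot\tH=\tF$ partitions the set of all pairs in $\F(\tm_1)\times\F(\tm_2)$, so the double sum collapses into a single unconstrained sum, with the argument $\tF(\theta)$ replaced by $(\tG\varodot\tH)(\theta)$.

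The key cancellation then comes from the defining identity of normalised product intersection,
\[
(\tG\varodot\tH)(\theta)=\frac{\tG(\theta)\,\tH(\theta)}{h(\tG\cdot\tH)},
\]
which exactly cancels the factor $h(\tG\cdot\tH)$ produced by the soft normalisation in the numerator of~\eqref{eq:dempster_fuzzy}. What remains factorises as a product of two independent sums, one over $\tG\in\F(\tm_1)$ weighted by $\tm_1(\tG)\tG(\theta)$ and one over $\tH\in\F(\tm_2)$ weighted by $\tm_2(\tH)\tH(\theta)$, which are precisely $pl_{\tm_1}(\theta)$ and $pl_{\tm_2}(\theta)$ by~\eqref{eq:contour_fuzzy}. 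The denominator of~\eqref{eq:dempster_fuzzy} is $1-\kappa$ by definition, yielding the claimed formula.

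There is essentially no obstacle here: the argument is a direct computation relying on (i) the pointwise definition of the normalised product intersection, (ii) the reindexing of the double sum as a single sum over $(\tG,\tH)$, and (iii) the factorisation of the resulting bilinear expression. The only point requiring a moment of care is checking that the reindexing is valid, i.e.\ that every pair $(\tG,\tH)\in\F(\tm_1)\times\F(\tm_2)$ with $h(\tG\cdot\tH)>0$ contributes exactly once through the unique $\tF=\tG\varodot\tH$ (pairs with $h(\tG\cdot\tH)=0$ contribute zero in both the numerator of~\eqref{eq:dempster_fuzzy} and in the final double sum, so they can be harmlessly included).
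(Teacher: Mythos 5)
Your proposal is correct and follows essentially the same route as the paper's proof: expand the contour function of $\tm_1\oplus\tm_2$ as a sum over pairs of focal sets, cancel $h(\tG\cdot\tH)$ against the denominator in the definition of $\varodot$, and factorise the resulting bilinear sum into $pl_{\tm_1}(\theta)\,pl_{\tm_2}(\theta)$. Your extra remark on handling pairs with $h(\tG\cdot\tH)=0$ and on the validity of the reindexing is a small point of care the paper leaves implicit, but it does not change the argument.
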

\begin{proof}
Let $\tF_1,\ldots,\tF_{f_1}$ and $\tG_1,\ldots,\tG_{f_2}$ be the focal sets of $\tm_1$ and $\tm_2$, respectively, with corresponding masses $m_{11},\ldots,m_{1f_1}$ and $m_{21},\ldots,m_{2f_2}$. From \eqref{eq:contour_fuzzy},
\begin{align*}
pl_{\tm_1\oplus\tm_2}(\theta)&=\sum_{i,j} \frac{m_{1i}m_{2j} h(\tF_i\tG_j)}{1-\kappa} (\tF_i\varodot\tG_j)(\theta)\\
&=\frac{\sum_{i,j} m_{1i}m_{2j}  \tF_i(\theta)\tG_j(\theta)}{1-\kappa}\\
&=\frac{\left(\sum_{i} m_{1i} \tF_i(\theta)\right) \left(\sum_{j} m_{2j} \tG_j(\theta)\right)}{1-\kappa}=\frac{pl_{\tm_1}(\theta)pl_{\tm_2}(\theta)}{1-\kappa}.
\end{align*}
\end{proof}

\begin{Prop}
\label{prop:combBayesfuzzy}
Let  $\tm_1$ be a fuzzy mass function and $m_2$ a Bayesian mass function. Then the orthogonal sum $\tm_1\oplus m_2$ is Bayesian and it is given by
\[
(\tm_1\oplus m_2)(\{\theta\})=\frac{\sum_{i=1}^f \tF_i(\theta)\tm_1(\tF_i) m_2(\{\theta\})}{\sum_{\theta'\in\Theta}\sum_{i=1}^f \tF_i(\theta')\tm_1(\tF_i) m_2(\{\theta'\})}=\frac{pl_{\tm_1}(\theta)m_2(\{\theta\})}{\sum_{\theta'\in\Theta}pl_{\tm_1}(\theta')m_2(\{\theta'\})}.
\]
\end{Prop}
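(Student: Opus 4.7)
The plan is to specialize the general product-intersection rule \eqref{eq:dempster_fuzzy} to the case where one of the mass functions, namely $m_2$, is Bayesian, and then read off the result. Write $\F(\tm_1) = \{\tF_1,\ldots,\tF_f\}$ with masses $\tm_1(\tF_i)$, and note that the focal sets of $m_2$ are the singletons $\{\theta\}$ (regarded as crisp, hence fuzzy, subsets of $\Theta$) with $m_2(\{\theta\})\ge 0$ and $\sum_{\theta}m_2(\{\theta\})=1$.

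First, I would compute the two pointwise quantities that appear in \eqref{eq:dempster_fuzzy} for the pair $(\tF_i,\{\theta\})$. The product membership $(\tF_i\cdot \{\theta\})(\theta')$ equals $\tF_i(\theta)$ if $\theta'=\theta$ and $0$ otherwise, so $h(\tF_i\cdot\{\theta\})=\tF_i(\theta)$. Consequently, provided $\tF_i(\theta)>0$, the normalized product $\tF_i\varodot\{\theta\}$ is the crisp singleton $\{\theta\}$. This is the key structural observation: all nonzero focal sets of $\tm_1\oplus m_2$ are singletons, which already establishes that $\tm_1\oplus m_2$ is Bayesian.

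Next, I would plug these computations into \eqref{eq:dempster_fuzzy} with $\tF=\{\theta\}$. The numerator sum ranges over pairs $(\tF_i,\{\theta'\})$ with $\tF_i\varodot\{\theta'\}=\{\theta\}$, i.e., $\theta'=\theta$, giving $\sum_{i=1}^f \tm_1(\tF_i)m_2(\{\theta\})\tF_i(\theta)$. The denominator is $\sum_{i,\theta'}\tm_1(\tF_i)m_2(\{\theta'\})\tF_i(\theta')$. This yields the first form claimed in the proposition.

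Finally, I would invoke the expression \eqref{eq:contour_fuzzy} for the contour function, $pl_{\tm_1}(\theta)=\sum_i \tm_1(\tF_i)\tF_i(\theta)$, to rewrite both numerator and denominator in the compact form $pl_{\tm_1}(\theta)m_2(\{\theta\})/\sum_{\theta'}pl_{\tm_1}(\theta')m_2(\{\theta'\})$, which matches Proposition \ref{prop:combBayes} in the crisp case. There is no real obstacle here; the only subtlety is to justify that singletons are the only focal sets with positive mass and that the normalization constants telescope correctly, both of which follow directly from $h(\tF_i\cdot\{\theta\})=\tF_i(\theta)$ and $\tF_i\varodot\{\theta\}=\{\theta\}$.
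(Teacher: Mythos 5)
Your proof is correct and follows essentially the same route as the paper: the key observation in both is that $h(\tF_i\cdot\{\theta\})=\tF_i(\theta)$ and $\tF_i\varodot\{\theta\}=\{\theta\}$, which makes the orthogonal sum Bayesian, after which the mass values are read off from the product form of the contour function. The only cosmetic difference is that the paper cites Proposition~\ref{prop:contour_fuzzy} for the second step, whereas you redo the underlying computation directly from \eqref{eq:dempster_fuzzy}.
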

\begin{proof}
As the normalized product of each focal set $\tF_i$ of $\tm_1$ with each focal set $\{\theta\}$ of $m_2$ equals $\{\theta\}$, the orthogonal $\tm_1\oplus m_2$ is Bayesian. The expression of $(\tm_1\oplus m_2)(\{\theta\})$ follows directly from Proposition \ref{prop:contour_fuzzy}.
\end{proof}

In DS theory, it is well known that Dempster's rule extends Bayesian conditioning, i.e., the orthogonal sum of a Bayesian belief function $P$ and a logical belief function focussed on some subset $A$ is a Bayesian belief function that is identical to the conditional probability measure $P(\cdot\mid A)$ \cite{shafer76}. In a similar way, the  generalized product-intersection rule \eqref{eq:dempster_fuzzy} extends conditioning of a probability measure by a fuzzy event, as stated in the following proposition.

\begin{Prop}
Let $m$ be a Bayesian mass function with corresponding probability measure $P$ and $\tm$ a logical fuzzy mass function with focal set $\tA \in \calF(\Theta)$. Then $m\oplus \tm$ is Bayesian and the corresponding belief function $Bel_{m\oplus \tm}$ is identical to the probability measure  $P(\cdot \mid \tA)$ obtained by conditioning $P$ by fuzzy event $\tA$.
\end{Prop}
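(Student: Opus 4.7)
The plan is to apply Proposition \ref{prop:combBayesfuzzy} (with the roles of the arguments swapped via commutativity of $\oplus$) directly to $m\oplus \tm$, and then recognize the resulting expression as Bayesian conditioning on the fuzzy event $\tA$.

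First, I would observe that since $\tm$ is logical with single fuzzy focal set $\tA$, its contour function is $pl_{\tm}(\theta) = \tA(\theta)$ for all $\theta \in \Theta$, by \eqref{eq:contour_fuzzy}. Next, since $m$ is Bayesian, Proposition \ref{prop:combBayesfuzzy} applies and gives immediately that $m \oplus \tm$ is Bayesian, with
\[
(m\oplus \tm)(\{\theta\}) = \frac{pl_{\tm}(\theta)\, m(\{\theta\})}{\sum_{\theta' \in \Theta} pl_{\tm}(\theta')\, m(\{\theta'\})} = \frac{\tA(\theta)\, P(\{\theta\})}{\sum_{\theta' \in \Theta} \tA(\theta')\, P(\{\theta'\})}.
\]

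Then I would identify the denominator. By Zadeh's definition of the probability of a fuzzy event \eqref{eq:probaFuzzy}, $P(\tA) = \sum_{\theta'\in\Theta} P(\{\theta'\}) \tA(\theta')$, so the denominator is exactly $P(\tA)$ (which is positive as a hidden but necessary nondegeneracy hypothesis — I would state the caveat $P(\tA) > 0$). The numerator $\tA(\theta) P(\{\theta\})$ is precisely $P(\{\theta\} \cap \tA)$, using either the minimum or product t-norm for fuzzy intersection, since the singleton $\{\theta\}$ has crisp membership and the fuzzy intersection reduces to a product with the indicator. Thus $(m\oplus\tm)(\{\theta\}) = P(\{\theta\} \cap \tA)/P(\tA)$, which is the standard Bayesian conditioning of $P$ by the fuzzy event $\tA$, i.e., $P(\{\theta\} \mid \tA)$.

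Finally, since $m\oplus\tm$ is Bayesian, the corresponding belief function $Bel_{m\oplus\tm}$ is an additive probability measure determined by its values on singletons, which coincide with $P(\cdot \mid \tA)$; hence $Bel_{m\oplus\tm}(B) = P(B \mid \tA)$ for all $B \subseteq \Theta$. No step is really an obstacle here — the only subtlety is the convention for conditioning a probability on a fuzzy event, but Zadeh's definition makes the identification unambiguous.
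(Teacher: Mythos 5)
Your proof is correct and follows essentially the same route as the paper: invoke Proposition \ref{prop:combBayesfuzzy} (the commutativity of $\oplus$ makes the swap of arguments harmless) to get $(m\oplus \tm)(\{\theta\})=P(\{\theta\})\tA(\theta)/\sum_{\theta'}P(\{\theta'\})\tA(\theta')$, then sum over singletons and recognize the result as $P(\cdot\mid\tA)$ via Zadeh's definition \eqref{eq:probaFuzzy}. Your explicit remark that $P(\tA)>0$ is needed is a sensible refinement the paper leaves implicit (it is the condition for the orthogonal sum to be well defined).
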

\begin{proof}
From Proposition \ref{prop:combBayesfuzzy}, $m\oplus \tm$ is Bayesian and it is given by
\[
(m\oplus \tm)(\{\theta\})=\frac{m(\{\theta\}) \tA(\theta)}{\sum_{\theta' \in\Theta}m(\{\theta' \}) \tA(\theta')}=\frac{P(\{\theta\}) \tA(\theta)}{\sum_{\theta' \in\Theta}P(\{\theta' \}) \tA(\theta')}.
\]
Consequently, for any $A\subseteq \Theta$,
\[
Bel_{m\oplus \tm}(A)=\sum_{\theta \in A} (m\oplus \tm)(\{\theta\})=\frac{ \sum_{\theta \in A} P(\{\theta\}) \tA(\theta)}{\sum_{\theta' \in\Theta}P(\{\theta' \}) \tA(\theta')} = P(A \mid \tA).
\]
\end{proof}

Finally, we can remark that the disjunctive rule \eqref{eq:disj_rule} can also be extended to fuzzy mass functions. The extension is simpler in this case, because there is no normalization. For any t-conorm $\bot$, we can define a disjunctive operator $\Cup_\bot$ as
\begin{equation}
\label{eq:disj_fuzzy}
(\tm_1\Cup_\bot \tm_2)(\tF):= \sum_{\tG\cup_\bot\tH=\tF}  \tm_1(\tG)\tm_2(\tH),
\end{equation}
where $\cup_\bot$ denotes the union of fuzzy sets based on t-conorm $\bot$. The operator $\Cup_\bot$ is associative for any t-conorm. To be consistent with the conjunctive combination \eqref{eq:dempster_fuzzy}, it makes sense to choose the probabilistic sum $u \bot v = u + v -uv$, which is dual to the product t-norm.  \new{As already noted in Section \ref{subsec:poss}, disjunctive combination may be more appropriate than normalized conjunctive rules \eqref{eq:dempster_fuzzy1}-\eqref{eq:dempster_fuzzy} when the  evidence is highly conflicting, in which case the normalizations in \eqref{eq:Gamma_norm}, \eqref{eq:dempster_fuzzy1} and \eqref{eq:dempster_fuzzy} may result in numerical instability.}

\subsection{Belief and plausibility of fuzzy events} 
\label{subsec:belplfuzzy}

As plausibility and belief functions are, respectively, mixtures of possibility and necessity measures, definitions for the degree of belief and the degree of plausibility of a fuzzy event follow naturally from corresponding definitions in the possibilistic framework. As explained in Section \ref{subsec:poss}, there are two definitions for the possibility and the necessity of fuzzy events, based on Sugeno and Choquet integrals. Consequently, a belief function $Bel$ induced by a fuzzy mass function $\tm$ can be extended to the lattice $(\calF(\Theta),\wedge,\vee)$ in, at least, two ways, which are described below.

\paragraph{Extension based on Sugeno integrals} Based on the definitions of  the possibility,  necessity and guaranteed possibility of fuzzy events  \eqref{eq:poss_fuzzy}, \eqref{eq:nec_fuzzy} and \eqref{eq:fGP}, the belief,  plausibility and commonality functions induced by a  fuzzy mass function $m$ can be extended to fuzzy events as:
\begin{subequations}
\label{eq:sugeno}
\begin{align}
\label{eq:Belf}
Bel_\tm^{(S)}(\tA)&:=\sum_{i=1}^f m_i N_{\tF_i}^{(S)}(\tA)=\sum_{i=1}^f m_i \min_{\theta\in\Theta} \; (\tA\vee \tF_i^c)(\theta)\\
\label{eq:Plf}
Pl_\tm^{(S)}(\tA)&:=\sum_{i=1}^f m_i \Pi_{\tF_i}^{(S)}(\tA)=\sum_{i=1}^f m_i \max_{\theta\in\Theta}\; (\tA\wedge \tF_i)(\theta) ,
\end{align}
and
\begin{equation}
\label{eq:Qf}
Q_\tm^{(S)}(\tA):=\sum_{i=1}^f m_i \Delta_{\tF_i}^{(S)}(\tA)=\sum_{i=1}^f m_i \min_{\theta \in \Theta}(\tA^c \vee \tF)(\theta).
\end{equation}
\end{subequations}
Definitions \eqref{eq:Belf} and \eqref{eq:Plf} were first introduced by Zadeh in \cite{zadeh79}. As the necessity measures $N_{\tF_i}^{(S)}$ are belief functions in the lattice $(\calF(\Theta),\wedge,\vee)$, so is $Bel_\tm^{(S)}$. 

\paragraph{Extension based on Choquet integrals} Using now definitions \eqref{eq:poss_nec_fuzzy_Choquet} and \eqref{eq:fGP1} based on Choquet integrals, we get the following extensions of function $Bel_\tm$, $Pl_\tm$ and $Q_\tm$ to fuzzy events:
\begin{subequations}
\label{eq:choquet}
\begin{align}
\label{eq:Belf1}
Bel_\tm^{(C)}(\tA)&:=\int_0^1 Bel_\tm(\cut\alpha\tA)d\alpha=\sum_{i=1}^f m_i N_{\tF_i}^{(C)}(\tA)=\sum_{i=1}^f m_i Bel_{m_{\tF_i}}(\tA)\\
\label{eq:Plf1}
Pl_\tm^{(C)}(\tA)&:=\int_0^1 Pl_\tm(\cut\alpha\tA)d\alpha=\sum_{i=1}^f m_i \Pi_{\tF_i}^{(C)}(\tA)=\sum_{i=1}^f m_i  Pl_{m_{\tF_i}}(\tA) ,
\end{align}
and
\begin{equation}
\label{eq:Qf1}
Q_\tm^{(C)}(\tA):=\int_0^1 Q_\tm(\cut\alpha\tA)d\alpha=\sum_{i=1}^f m_i \Delta_{\tF_i}^{(C)}(\tA)=\sum_{i=1}^f m_i Q_{m_{\tF_i}}(\tA),
\end{equation}
\end{subequations}
where, as before, $m_{\tF_i}$ is the consonant mass functions such that $pl_{m_{\tF_i}}=\pi_{\tF_i}$. Definitions \eqref{eq:Belf1} and \eqref{eq:Plf1} were first proposed by Yen \cite{yen90}. 


We can remark that definitions \eqref{eq:sugeno} and \eqref{eq:choquet} coincide when either all focal sets are crisp, or the event $\tA$ is crisp. From \eqref{eq:poss_nec_fuzzy_Choquet1}, definitions \eqref{eq:choquet} also extend \eqref{eq:alphacut}, i.e., we have
\begin{subequations}
\label{eq:alphacut1}
\begin{align}
Bel_\tm^{(C)}(\tA) &= \int_0^1 Bel_{\cut\alpha\tm}(\tA) d\alpha,\\
Pl_\tm^{(C)}(\tA) &= \int_0^1 Pl_{\cut\alpha\tm}(\tA) d\alpha,
\end{align}
and
\begin{equation}
Q_\tm^{(C)}(\tA)=  \int_0^1 Q_{\cut\alpha\tm}(\tA) d\alpha
\end{equation}
\end{subequations}
for all $\tA \in \calF(\Theta)$. Overall, both  \eqref{eq:sugeno} and \eqref{eq:choquet} seem to provide sensible definitions to quantify the degrees of belief and plausibility of fuzzy events based on uncertain and fuzzy evidence. Yen \cite{yen90} argued for \eqref{eq:choquet} by showing that the Choquet integrals are more sensitive to small changes in the fuzzy focal sets $\tF_i$ than the Sugeno integrals. This  argument could actually be reversed, as membership functions of fuzzy sets are usually difficult to determine precisely in practice, and the relative insensitivity of the degrees of belief and plausibility to small perturbations of the membership functions of the focal sets could rather be considered as an advantage. On the other hand, the fact that, for instance, $Bel_\tm^{(C)}(\tA)$ can be determined by computing $Bel_{\cut\beta\tm}(\cut\alpha\tA)$ for all $\alpha$ and $\beta$, and then by integrating over $\alpha$ and $\beta$, is a nice property of the Choquet integral. More research is needed to find decisive arguments in favor of any of these two approaches.


\section{Application to statistical inference}
\label{sec:stat}

In the following, we apply the general framework outlined in Section \ref{sec:fuzzyDS} to statistical inference. The consonant  likelihood-based belief function, originally proposed by Shafer \cite{shafer76} is first  recalled in Section \ref{subsec:likelihood}, together with its axiomatic justification provided in \cite{denoeux13b}. In Section \ref{subsec:axioms}, we show that, by adding one axiom and enlarging the solution space, we can justify the representation of sample information by  a fuzzy mass function with a single focal set, equal to the relative likelihood function. Binomial inference is used as an illustrative example in Section \ref{subsec:ex}.

\subsection{Likelihood-based belief function}
\label{subsec:likelihood}

Let us consider a statistical inference problem in which the observable data $X \in \calX$ is randomly generated according to a  probability mass or density function $f(x;\btheta)$, where  $\btheta$ is a parameter whose value is only known to belong to a finite set $\Theta$. After observing a realization $x$ of $X$, we wish to represent the information about $\btheta$ by a belief function $Bel(\cdot;x)$ on $\Theta$. Different approaches to this problem have been proposed by several authors, generalizing either Bayesian inference  \cite{dempster66,dempster68a,dempster08} or  frequentist concepts such as confidence regions and p-values \cite{balch12,denoeux18b,martin16c,martin19}.  

A particularly simple and appealing solution, proposed by Shafer \cite{shafer76},  is to consider the \emph{consonant} belief function whose contour function is equal to the relative likelihood:
\begin{subequations}
\label{eq:lik}
\begin{equation}
pl_0(\theta;x):=\frac{L(\theta;x)}{\max_{\theta'\in\Theta} L(\theta';x)},
\end{equation}
where $L(\cdot;x): \theta \rightarrow f(x;\theta)$ is the likelihood function. The corresponding \new{consonant} plausibility and belief functions are, thus, defined as:
\begin{equation}
\label{eq:pllik}
Pl_0(A;x)=\max_{\theta\in A} pl_0(\theta;x)
\end{equation}
and
\begin{equation}
\label{eq:bellik}
Bel_0(A;x)=1-Pl_0(A^c;x)=1-\max_{\theta\not\in A} pl_0(\theta;x),
\end{equation}
\end{subequations}
for all $A \subseteq \Theta$. This likelihood-based belief function has several interesting properties discussed in \cite{denoeux13b}. In particular, combining $Bel_0(\cdot;x)$ with a Bayesian prior probability mass function (PMF) on $\Theta$ yields the Bayesian posterior PMF. 

The consonant belief function \eqref{eq:lik}, introduced in \cite{shafer76} on intuitive grounds, can be justified from the Least Commitment Principle  (LCP) \cite{denoeux13b}. This principle states that, when several belief functions satisfy some requirements for a reasonable representation of a given belief state, the least informative should be chosen  \cite{smets93b}. Here, we consider the following requirement, which states that combining  $Bel_\Theta(\cdot;x)$ with a prior PMF $\pi(\theta)$ by Dempster's rule yields the Bayesian posterior.

%

\begin{Req}[Compatibility with Bayesian inference] 
\label{ax:Bayes}
Let $\pi(\theta)$ be a prior PMF. Then
\begin{equation}
\label{eq:bayes}
\pi \oplus Bel(\cdot;x) = P(\cdot \mid x),
\end{equation}
where $P(\cdot \mid x)$ is the PMF on $\Theta$ defined by 
\[
P(\theta \mid x) = \frac{L(\theta ; x) \pi(\theta)}{\sum_{\theta'\in \Theta}L(\theta' ; x) \pi(\theta')}
\]
for all $\theta\in\Theta$.
\end{Req}

To apply the LCP, we need a way to compare the ``information content'' of belief functions. Here, we consider the \emph{Q-ordering relation} \cite{dubois86a} defined as follows: given two belief functions $Bel_1$ and $Bel_2$, with commonality functions $Q_1$ and $Q_2$, $Bel_1$ is Q-less committed than $Bel_2$ iff for all $A\subseteq \Theta$, $Q_1(A)\ge Q_2(A)$. The following proposition was proved in \cite{denoeux13b}.

\begin{Prop}
\label{prop:minQ}
Belief function $Bel_0(\cdot;x)$ defined by \eqref{eq:lik} is the Q-least committed belief function verifying Requirement \ref{ax:Bayes}. 
\end{Prop}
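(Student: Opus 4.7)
The plan is to exploit Proposition \ref{prop:combBayes} to convert Requirement \ref{ax:Bayes} into a rigid constraint on the contour function, then use the elementary bound $Q_m(A) \le \min_{\theta \in A} pl_m(\theta)$ to identify $Bel_0(\cdot;x)$ as the Q-maximal element. First, for any candidate mass function $m$ with contour $pl_m$, Proposition \ref{prop:combBayes} yields $(\pi \oplus m)(\{\theta\}) = pl_m(\theta)\pi(\theta)/\sum_{\theta'} pl_m(\theta')\pi(\theta')$. Requiring this to equal $P(\theta \mid x) = L(\theta;x)\pi(\theta)/\sum_{\theta'} L(\theta';x)\pi(\theta')$ for every prior $\pi$---in particular for the uniform one---forces $pl_m(\theta)/L(\theta;x)$ to be constant in $\theta$, so $pl_m(\theta) = c\,L(\theta;x)$ for some $c > 0$. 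Since a contour function takes values in $[0,1]$, the constant $c$ cannot exceed $1/\max_{\theta'} L(\theta';x)$, and one checks conversely that any $pl_m$ of this form satisfies Requirement \ref{ax:Bayes} for every prior.

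Second, I would invoke the universal inequality that, for any mass function $m$ and any nonempty $A \subseteq \Theta$,
\[
Q_m(A) = \sum_{F \supseteq A} m(F) \le \sum_{F \ni \theta} m(F) = pl_m(\theta) \quad \text{for every } \theta \in A,
\]
hence $Q_m(A) \le \min_{\theta \in A} pl_m(\theta) \le \min_{\theta \in A} pl_0(\theta;x)$, where the second inequality uses $c \le 1/\max_{\theta'} L(\theta';x)$. To conclude, I would recall that the consonant mass function $m_0$ with contour $pl_0$---constructed via \eqref{eq:consonant}---has commonality equal to the guaranteed possibility of the associated possibility distribution, so $Q_{m_0}(A) = \min_{\theta \in A} pl_0(\theta;x)$; this can be verified directly by writing the nested focal sets of $m_0$ and telescoping. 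Combining these two facts yields $Q_m(A) \le Q_{m_0}(A)$ for every admissible $m$ and every $A$, which is precisely the statement that $Bel_0(\cdot;x)$ is Q-least committed among belief functions verifying Requirement \ref{ax:Bayes}.

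The main obstacle is making step one airtight: the uniform prior does force proportionality between $pl_m$ and $L(\cdot;x)$, but one must still confirm that, conversely, every contour of the form $pl_m = c\,L(\cdot;x)$ yields the Bayesian posterior under \emph{any} prior, and that the orthogonal sums involved are always well-defined---i.e., the denominators $\sum_{\theta'} pl_m(\theta')\pi(\theta')$ are strictly positive. Both conditions hold as soon as the supports of $L(\cdot;x)$ and $\pi$ intersect, which is implicit in the problem setup. Once that reduction is in place, the remainder is essentially a short computation, the key conceptual point being that the consonant structure saturates the bound $Q_m(A) \le \min_{\theta \in A} pl_m(\theta)$, as follows directly from the material on consonant mass functions in Section \ref{subsec:poss}.
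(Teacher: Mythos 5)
Your proposal is correct and follows essentially the same route as the paper's proof: use Proposition \ref{prop:combBayes} to force $pl_m = c\,L(\cdot;x)$ with $c \le [\max_{\theta'} L(\theta';x)]^{-1}$, apply the universal bound $Q_m(A) \le \min_{\theta\in A} pl_m(\theta)$, and observe that the consonant mass function saturates this bound since $Q_{m_0}(A)=\min_{\theta\in A} pl_0(\theta;x)$. The additional care you take (deriving proportionality via the uniform prior, checking the converse, and noting well-definedness of the orthogonal sums) only makes explicit steps the paper leaves implicit.
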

\begin{proof}
From Proposition \ref{prop:combBayes}, Requirement \ref{ax:Bayes} implies that
\[
Q(\{\theta\})=pl(\theta) = c L(\theta ; x),
\]
for all $\theta\in\Theta$ and for some constant $c$. The largest admissible value of $c$ is 
\[
c_0= [\max_{\theta\in\Theta} L(\theta ; x)]^{-1}.
\]
Hence, the Q-least committed belief function verifying Requirement \ref{ax:Bayes} must be such that
\[
Q(\{\theta\})=\frac{L(\theta;x)}{\max_{\theta'\in\Theta} L(\theta';x)}=pl_0(\theta;x).
\]
Now, for any commonality function $Q$, we have
\[
Q(A) \le \min_{\theta\in A} Q(\{\theta\})
\]
for all $A\subseteq \Theta$. Consequently,  we must have $Q(A) \le \min_{\theta\in A} pl_0(\theta;x)$ for all $A\subseteq \Theta$. The commonality function $Q_0$ corresponding to $Bel_0$ verifies $Q_0(A) = \min_{\theta\in A} pl_0(\theta;x)$. Consequently, $Q_0(A)\ge Q(A)$ for all $A$ and all $Q$ verifying Requirement \ref{ax:Bayes}.
\end{proof}

Proposition \ref{prop:minQ} means that $Bel_0(\cdot;x)$ is, in some sense, the least informative belief function verifying Requirement \ref{ax:Bayes}. If we see the belief function $Bel_0(\cdot;x)$ as being induced by a consonant mass function $m_0(\cdot;x)$, there is, however, a problem with this representation: if $x$ and $y$ are the outcomes of independent random experiments with the same parameter space $\Theta$, then $m_0(\cdot;x,y)$ is not equal to the orthogonal sum of $m_0(\cdot;x)$ and $m_0(\cdot;y)$, even though the two pieces of evidence $x$ and $y$ are independent. Indeed, it seems reasonable to impose the following requirement.

\begin{Req}[Combination of independent outcomes]
\label{ax:indep}
Let $x$ and $y$ be the outcomes of independent random experiments with the same parameter space $\Theta$.  Let  $m(\cdot;x)$,  $m(\cdot;y)$ and $m(\cdot;x,y)$ denote the mass functions on $\Theta$ induced, respectively, by the observation of $x$, $y$ and $(x,y)$. Then  
\[
m(\cdot;x,y)=m(\cdot;x) \oplus m(\cdot;y).
\]
\end{Req}

The fact that mass function $m_0(\cdot;x)$ fails to meet Requirement  \ref{ax:indep}  led Shafer to eventually reject it as a rational representation of statistical evidence \cite{shafer82}. It can be shown \cite{walley87,halpern92} that the only mass function $m(\cdot;x)$ verifying Requirements \ref{ax:Bayes} and \ref{ax:indep}, as well as the strong likelihood principle (i.e., depending only on the relative likelihood) and  some additional regularity properties, is Bayesian  and has the following expression:
\[
m(\{\theta\};x)=\frac{L(\theta;x)}{\sum_{\theta'\in \Theta} L(\theta';x)}.
\]
Hence, Requirement \ref{ax:indep} seems to rule out belief function $Bel_0$, as well as any other non-additive representation of statistical evidence, an argument used by Halpern and Fagin to question the usefulness of belief functions \cite{halpern92}. However, this conclusion does not hold if we enlarge the solution space to include fuzzy mass functions, as will be shown in the next section.

\subsection{Representation of statistical information as a fuzzy mass function}
\label{subsec:axioms}

Belief function $Bel_0(\cdot;x)$ defined by \eqref{eq:lik} is induced by a single crisp consonant mass function $m_0(\cdot;x)$, but it is also induced by fuzzy mass functions. In particular, it is induced by the fuzzy mass function $\tm_0(\cdot;x)$ such that $\tm_0(\tL_x;x)=1$, where $\tL_x$ is the fuzzy subset of $\Theta$ with membership function
\[
\tL_x(\theta):=\frac{L(\theta;x)}{\max_{\theta'\in\Theta} L(\theta';x)}=pl_0(\theta;x)
\]
for all $\theta\in \Theta$. Fuzzy set $\tL_x$ can be seen as the \emph{fuzzy set of likely values of $\btheta$ after observing $x$}. It is clear that $\tm_0(\cdot;x)$ meets Requirement \ref{ax:indep}. Indeed, if $x$ and $y$ are independent observations,
\[
\tL_{x,y}=\tL_x \varodot \tL_y
\]
and
\[
\tm_0(\cdot;x,y)=\tm_0(\cdot;x)\oplus\tm_0(\cdot;y).
\]
As a consequence of Proposition \ref{prop:combBayesfuzzy}, $\tm_0(\cdot;x)$ also meets Requirement \ref{ax:Bayes}. Fuzzy mass function $\tm_0(\cdot;x)$ thus seems to be an adequate representation of statistical evidence: it verifies both requirements, and the associated belief function is the Q-least committed subject to Requirement \ref{ax:Bayes}.

We can remark that there are other fuzzy mass functions that induce $Bel_0(\cdot;x)$. They are characterized in the following proposition.

\begin{Prop}
A fuzzy mass function $\tm$ on $\Theta=\{\theta_1,\ldots,\theta_q\}$ with focal sets $\tF_1,\ldots,\tF_f$ and masses $m_1,\ldots,m_f$ verifies $Bel_\tm=Bel_0$ if and only if
\begin{equation}
\label{eq:cond1}
\sum_{i=1}^f m_i \tF_i=\tL_x
\end{equation}
and there exists a permutation $\sigma$ of $\{1,\ldots,q\}$ such that, for all $i\in\{1,\ldots,f\}$,
\begin{equation}
\label{eq:cond2}
\tF_i(\theta_{\sigma(1)}) \le \tF_i(\theta_{\sigma(2)})\le \ldots \le \tF_i(\theta_{\sigma(q)}).
\end{equation}
\end{Prop}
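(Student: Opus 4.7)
My plan is to rewrite both $Bel_\tm(A)$ and $Bel_0(A)$ using the formula \eqref{eq:defBel1} together with the fact that $Bel_0 = N_{\tL_x}$. Concretely, for any $A\subsetneq \Theta$,
\[
Bel_\tm(A) = 1-\sum_{i=1}^f m_i\max_{\theta\notin A}\tF_i(\theta),\qquad Bel_0(A) = 1-\max_{\theta\notin A}\tL_x(\theta),
\]
so $Bel_\tm(A)=Bel_0(A)$ is equivalent to
\begin{equation}\label{eq:key}
\sum_{i=1}^f m_i\max_{\theta\notin A}\tF_i(\theta) \;=\; \max_{\theta\notin A}\sum_{i=1}^f m_i\tF_i(\theta).
\end{equation}
The convexity inequality $\sum m_i\max\ge\max\sum m_i$ always holds, and the proof will characterize when it is tight.

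For the ``if'' direction, I would fix a nonempty $A\subsetneq\Theta$ and let $k^*=\max\{k:\theta_{\sigma(k)}\notin A\}$. Condition \eqref{eq:cond2} says each $\tF_i$ is non-decreasing along $\sigma$, so $\theta^*:=\theta_{\sigma(k^*)}$ simultaneously maximizes every $\tF_i$ over $A^c$. Therefore $\sum_i m_i\max_{\theta\notin A}\tF_i(\theta)=\sum_i m_i\tF_i(\theta^*)$, which by \eqref{eq:cond1} equals $\tL_x(\theta^*)=\max_{\theta\notin A}\tL_x(\theta)$. This establishes \eqref{eq:key}, hence $Bel_\tm=Bel_0$.

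For the ``only if'' direction, equality of the contour functions $pl_\tm=pl_0$ follows from $Pl_\tm=Pl_0$ (dually from $Bel_\tm=Bel_0$) applied to singletons; combined with the formula $pl_\tm(\theta)=\sum_i m_i\tF_i(\theta)$ from \eqref{eq:contour_fuzzy}, this yields \eqref{eq:cond1}. To obtain \eqref{eq:cond2}, I would apply \eqref{eq:key} with $A=\Theta\setminus\{\theta_j,\theta_k\}$ for each pair $j\ne k$. Writing $a_i=\tF_i(\theta_j)$, $b_i=\tF_i(\theta_k)$ and assuming WLOG $\sum m_i a_i\ge\sum m_i b_i$, tightness reduces to $\sum_i m_i(\max(a_i,b_i)-a_i)=0$; since each summand is non-negative and $m_i>0$, this forces $b_i\le a_i$ for every $i\in\{1,\dots,f\}$. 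Thus the relation $\theta_j\preceq\theta_k\iff \tF_i(\theta_j)\le\tF_i(\theta_k)\text{ for all }i$ is a total preorder on $\Theta$ (reflexivity and transitivity are immediate, and the pairwise argument gives totality), and any linear extension produces a permutation $\sigma$ satisfying \eqref{eq:cond2}.

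The main obstacle is the extraction of the global permutation $\sigma$ in the forward direction: the pair-by-pair argument only gives local comparability of values $\tF_i(\theta_j)$ versus $\tF_i(\theta_k)$, so one must verify that these pairwise comparisons assemble into a transitive, hence linearly extendable, preorder; the remaining work is essentially a tightness-of-convexity calculation together with the identification $\tL_x=\sum_i m_i\tF_i$.
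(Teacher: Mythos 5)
Your proof is correct and follows essentially the same route as the paper's: you work with $Pl_\tm(A^c)=\sum_i m_i\max_{\theta\in A^c}\tF_i(\theta)$ where the paper works with the dual quantity $Q_\tm(A)=\sum_i m_i\min_{\theta\in A}\tF_i(\theta)$, but the key step in both is the same tightness-of-convexity argument on two-element sets forcing the fuzzy focal sets to be comonotonic. The only minor difference is that you compare all pairs and then linearly extend the resulting total preorder, whereas the paper tests only consecutive pairs in the ordering induced by $pl_0$.
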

\begin{proof}
From \eqref{eq:contour_fuzzy}, condition \eqref{eq:cond1} is necessary and sufficient to ensure that $pl_\tm(\theta)=Q_\tm(\{\theta\})=pl_0(\theta)$ for all $\theta\in\Theta$. Now, for any $A\subseteq \Theta$,
\[
Q_\tm(A)=\sum_{i=1}^f m_i \min_{\theta\in A} \tF_i(\theta)
\]
If \eqref{eq:cond2} holds, then, for any $A\subseteq \Theta$, there exists $\theta_A$ in $\Theta$ such that, for any $i\in\{1,\ldots,f\}$, $\tF_i(\theta_A)=\min_{\theta\in A} \tF_i(\theta)$. We then have
\[
Q_\tm(A)=\sum_{i=1}^f m_i\tF_i(\theta_A)=\min_{\theta\in A} \sum_{i=1}^f m_i\tF_i(\theta)=\min_{\theta\in A} Q_\tm(\{\theta\})=Q_0(A).
\]
Conversely, assume that $Q_\tm(A)=Q_0(A)=\min_{\theta\in A} pl_0(\{\theta\})$ for all $A\subseteq\Theta$. Let $\sigma$ be a permutation of $\{1,\ldots,q\}$ such that
\[
pl_0(\theta_{\sigma(1)}) \le pl_0(\theta_{\sigma(2)})\le \ldots \le pl_0(\theta_{\sigma(q)}).
\]
For any $k\in\{1,\ldots,q-1\}$, we have
\[
Q(\{\theta_{\sigma(k)}\})=Q(\{\theta_{\sigma(k)},\theta_{\sigma(k+1)}\}),
\]
i.e.,
\[
\sum_{i=1}^f m_i\tF_i(\theta_{\sigma(k)})=\sum_{i=1}^f m_i \min\left(\tF_i(\theta_{\sigma(k)}),\tF_i(\theta_{\sigma(k+1)})\right),
\]
which implies that $\tF_i(\theta_{\sigma(k)})\le \tF_i(\theta_{\sigma(k+1)})$ for  all $k$ and all $i$.
\end{proof}

There are, thus, infinitely many fuzzy mass functions $\tm$ that induce $Bel_0$. Let $\tm(\cdot;x)$ and $\tm(\cdot;y)$ be fuzzy mass functions corresponding to $Bel_0(\cdot;x)$ and $Bel_0(\cdot;y)$, assumed to have, respectively, $f$ and $f'$ focal sets. Then, $\tm(\cdot;x)\oplus\tm(\cdot;y)$ will generally have $ff'$ focal sets and will be different from $\tm(\cdot;x,y)$, except if $f=f'=1$. Consequently, Requirement \ref{ax:indep}  justifies the choice of $\tm_0$ as a fuzzy mass function representation of $Bel_0$, which is the Q-least committed belief function satisfying Requirement \ref{ax:Bayes}.

\subsection{Application to binomial inference}
\label{subsec:ex}

Let us consider an urn with a known number $N$ of balls and an unknown proportion $\btheta$ of black balls. Thus, $\btheta\in\Theta=\{0,1/N,2/N,\ldots,(N-1)/N,1\}$. Assume that we have observed $x$ black balls in $n$ draws with replacement. The likelihood function is
\[
L(\theta;x)= \begin{pmatrix} n\\ x\end{pmatrix} \theta^x(1-\theta)^{n-x}.
\]
Let $\thetah_x$ be the maximum likelihood estimate:
\[
\thetah_x=\arg \max_{\theta\in\Theta} L(\theta;x).
\]
The fuzzy set $\tL_x$ of likely values of $\btheta$ after observing $x$ has membership function
\[
\tL_x(\theta)=\frac{L(\theta;x)}{L(\thetah_x;x)}=\left(\frac{\theta}{\thetah_x}\right)^x\left(\frac{1-\theta}{1-\thetah_x}\right)^{n-x}.
\]
The statistical evidence is represented by mass function $\tm_0(\cdot;x)$ such that $\tm_0(\tL_x;x)=1$. By construction, combining $\tm_0(\cdot;x)$ by Dempster's rule with a Bayesian prior yields the Bayesian posterior. For instance, combining $\tm_0(\cdot;x)$ with a uniform prior  yields the posterior PMF
\[
p(\theta \mid x) \propto \theta^x(1-\theta)^{n-x}.
\]

If we now perform a second random experiment in which we observe $y$ black balls out of $q$ draws, the combined evidence about $\theta$ is represented by the fuzzy mass function
\[
m_0(\cdot,x,y)=m_0(\cdot,x)\oplus m_0(\cdot,y)
\]
with the single fuzzy focal set $\tL_{x,y}=\tL_x\varodot \tL_y$.

\paragraph{Frequentist properties} The previous analysis  applies \emph{after} the random experiment has been performed and the number $x$ of black balls has been observed. \emph{Before} the experiment has been performed, the number $X$ of black balls is a random variable with binomial distribution, and we can consider the random set $(\Omega_X,2^{\Omega_X},P_{X|\theta_0},\tGamma)$, where $\Omega_X=\{0,1,\ldots,n\}$ is the sample space of $X$, $P_{X|\theta_0}$ is the probability distribution of $X$ for the true value $\theta_0$ of the parameter, and $\tGamma(x)=\tL_x$. The corresponding fuzzy mass function $\tm$ is given by
\[
\tm(\tL_x)=P_{X|\theta_0}(\{x\})=\begin{pmatrix} n\\ x\end{pmatrix}\theta_0^x (1-\theta_0)^{n-x} 
\]
for all $x\in\Omega_X$. The plausibility $pl_\tm(\theta_0)$ of $\theta_0$ is a random variable that take values $\tL_x(\theta_0)$, for $x\in\Omega$, with probabilities  $P_{\theta_0}(\{x\})$. If $N$ is large enough so that $\btheta$ can be treated as a continuous parameter, we have, by Wilks' theorem \cite{wilks38},
\[
-2 \log pl_\tm(\theta_0) \cvgd \chi^2_1
\]
and $n\rightarrow \infty$. Consequently, 
\[
P(-2 \log pl_\tm(\theta_0)\le\chi^2_{1;1-\alpha})= P(pl_\tm(\theta_0)\ge \exp(-0.5\chi^2_{1;1-\alpha}))\rightarrow 1-\alpha,
\]
i.e., the $c_\alpha$-cut of $\tL_X$, with $c_\alpha=\exp(-0.5\chi^2_{1;1-\alpha})$, is an asymptotic confidence region for $\btheta$ at level $1-\alpha$. This property is illustrated in Figure \ref{fig:pl_theta0} for $\theta_0=0.3$ and three cases: (1) $N=100$, $n=10$, (2) $N=n=100$ and $N=n=1000$. The true confidence levels of regions $^{c_\alpha}\tL_X$ for $\alpha=0.01$, $\alpha=0.05$ and $\alpha=0.1$ are shown in Table \ref{tab:covprob}.

\begin{table}
\begin{center}
\caption{Coverage probabilities of confidence regions $^{c_\alpha}\tL_X$ for $\alpha=0.01$, $\alpha=0.05$ and $\alpha=0.1$, with $\theta_0=0.3$. \label{tab:covprob}}
\begin{tabular}{cccc}
 \cline{2-4}
 & \multicolumn{3}{c}{Coverage probability}\\
 \cline{2-4}
$1-\alpha$ & $N=100, n=50$& $N=n=100$ & $N=n=1000$\\
\hline
0.99 &0.9927 &0.9922 &0.9894\\
0.95 &0.9570 &0.9525 &0.9511\\
0.90 &0.8825 &0.9025 &0.8955\\
\hline
\end{tabular}
\end{center}
\end{table}

\begin{figure}
\centering  
\subfloat[\label{fig:pl_theta0_100_50}]{\includegraphics[width=0.5\textwidth]{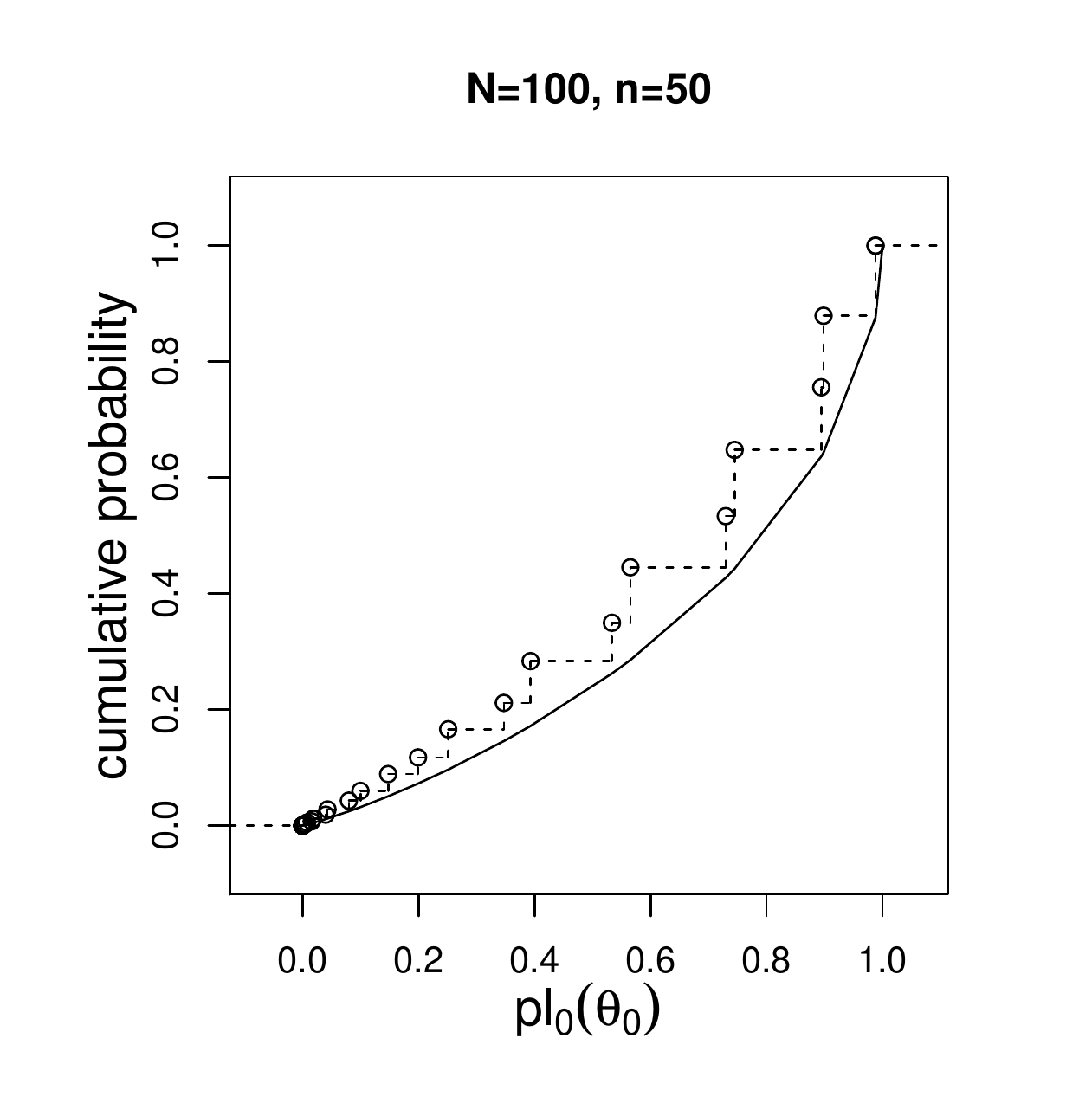}}
\subfloat[\label{fig:pl_theta0_100}]{\includegraphics[width=0.5\textwidth]{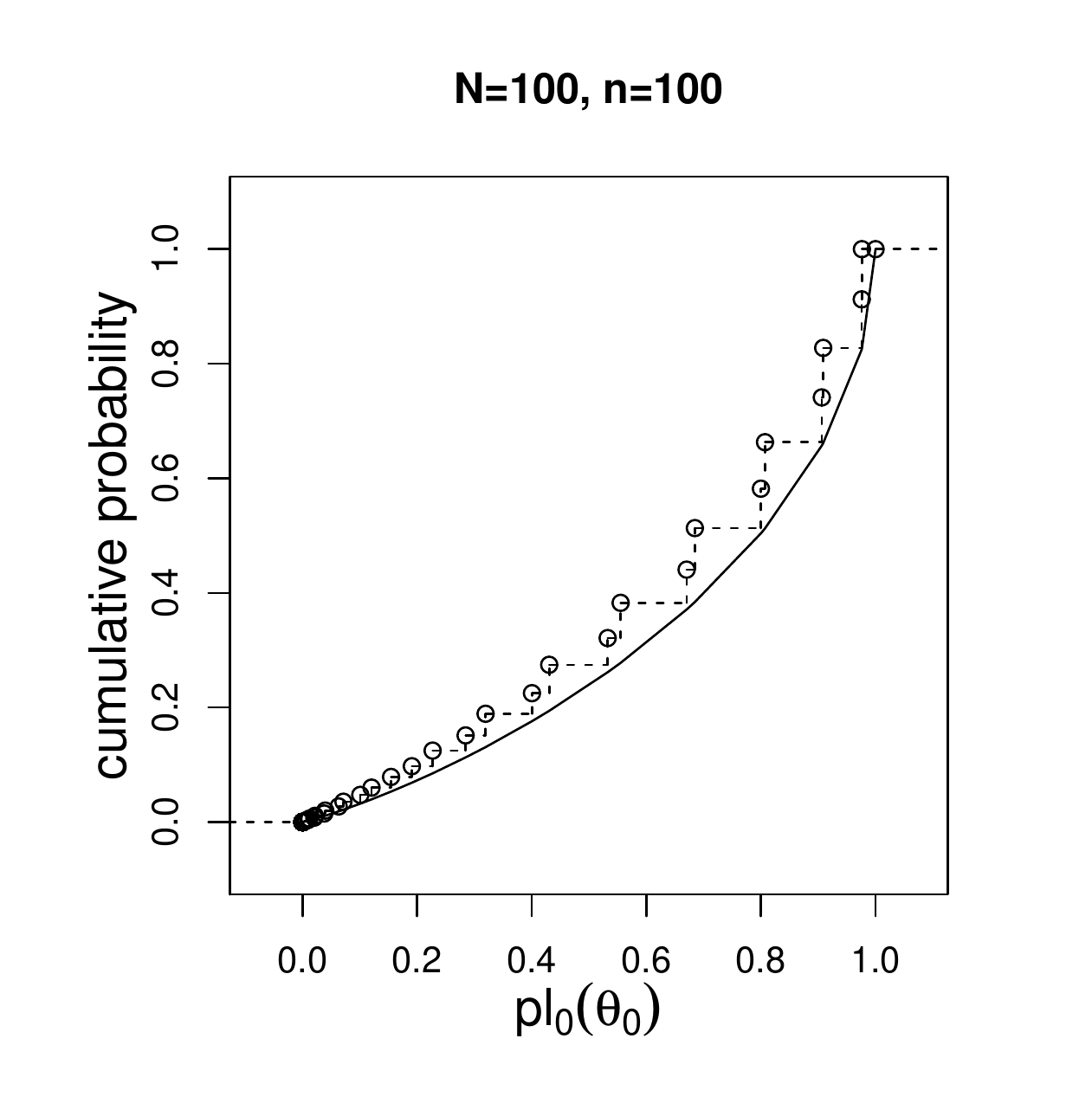}}\\
\subfloat[\label{fig:pl_theta0_1000}]{\includegraphics[width=0.5\textwidth]{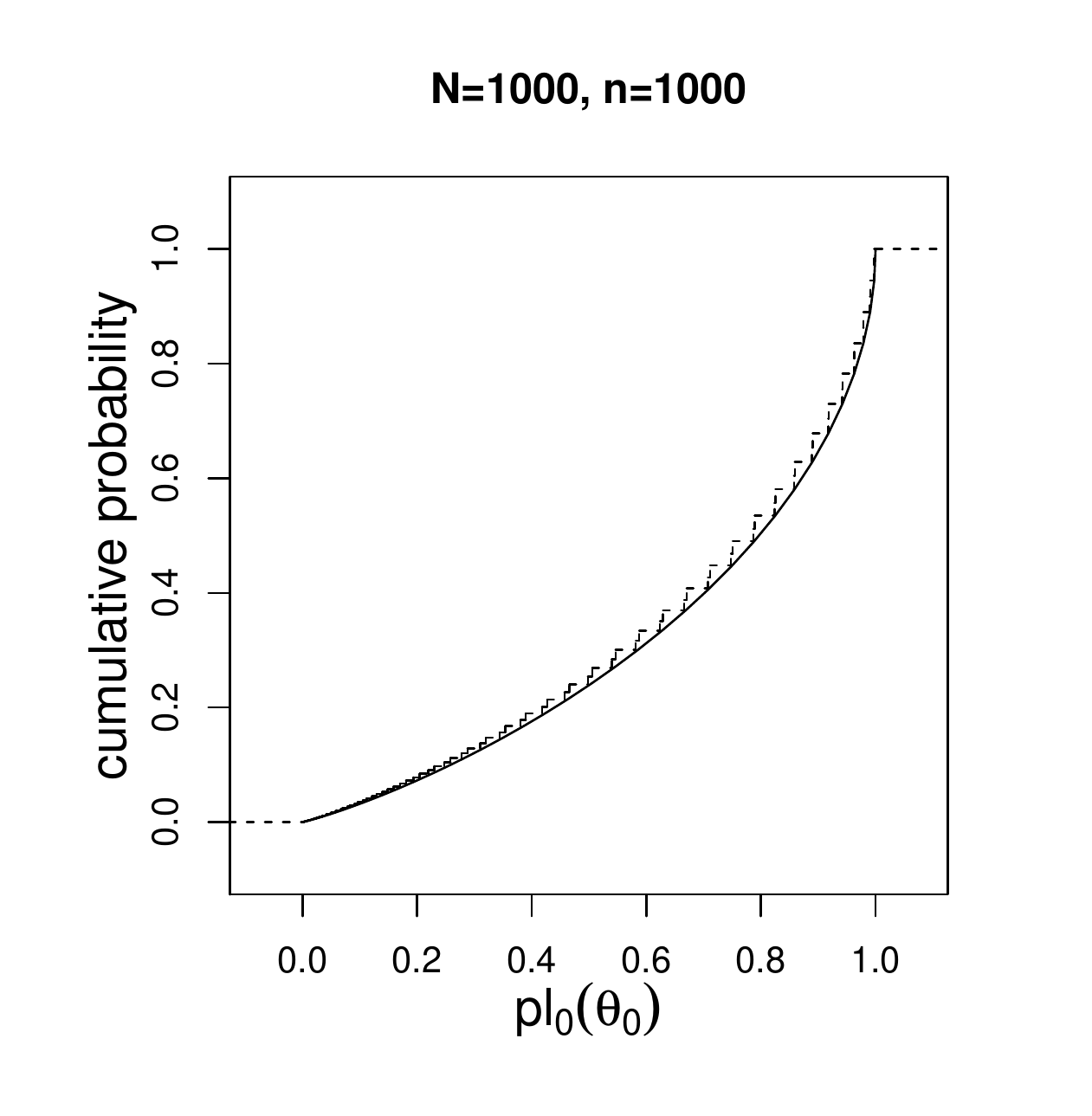}}
\caption{True cdf of $pl_0(\theta_0;X)=\tL_X(\theta)$ (broken lines) and asymptotic approximation (solid lines) for $N=100$ and $n=50$ (a), $N=n=100$ (b) and $N=n=1000$ (c). \label{fig:pl_theta0}}
\end{figure}

\paragraph{Prediction}

Assume that we have picked $x$ black balls out of $n$ draws with replacement, and we now want to predict the number $Y$ of black balls that will be obtained in another random experiment in which $r$ balls will be drawn with replacement from the same urn. This prediction problem was addressed in \cite{ ann14a,ann16} in the belief function framework. The approach introduced in \cite{ ann14a} is to write $Y$ as a function $\varphi(\btheta,\bU)$ of parameter $\btheta$ and a pivotal random variable $\bU$ with known probability distribution. A \emph{predictive belief function} on $Y$ is then obtained by combining the relation $Y=\varphi(\btheta,\bU)$ with the belief function on $\btheta$ derived from the likelihood function. Here, we can write
\begin{equation}
\label{eq:phi}
Y=\sum_{i=1}^r I(U_i \le \btheta),
\end{equation}
where $\bU=(U_1,\ldots,U_r)$ has a uniform distribution in $[0,1]^r$.

For a given realization $\bu=(u_1,\ldots,u_r)$ of $\bU$, the fuzzy constraint ``$\btheta \text{ is } \tL_x$'' induces a fuzzy constraint ``$Y \text{ is } \tY_{x,\bu}$'', where $\tY_{x,\bu}$ is a fuzzy subset of $\Omega_Y=\{0,\ldots,r\}$ whose membership function can be computed using the extension principle \cite{zadeh75} as
\[
\tY_{x,\bu}(y)=\max_{\{\theta: y=\sum_{i=1}^r I(u_i \le \theta)\}} \tL_x(\theta).
\]
The mapping $\bu \rightarrow \tY_{x,\bu}$ and the probability distribution of $\bU$ thus jointly define a random fuzzy set of $\Omega_Y$. The corresponding fuzzy mass function $\tmu_x$ is defined by
\[
\tmu_x(\tA)=P(\{\bu \in [0,1]^r: \tY_{x,\bu}=\tA\}),
\] 
for all $\tA \in \calF^*(\Omega_Y)$. 

It can be shown that the predictive belief function generated by $\tmu_x$ also has an interesting frequentist property \cite{denoeux18b}. As before, let $c_\alpha=\exp(-0.5\chi^2_{1;1-\alpha})$,  let ${^{c_\alpha}}\tmu_x$ be the (crisp) mass function whose focal sets are the $c_\alpha$-cuts of the fuzzy focal sets of $\tmu_x$, and let ${^{c_\alpha}}Bel_{\tmu_x}$ be the corresponding function. Then for any $\alpha\in (0,1)$, we have, asymptotically,
\begin{equation}
\label{eq:predictive_bf}
P_{X|\theta_0}\left( \left\{x\in \Omega_X : \forall A\subseteq \Omega_Y, \; {^{c_\alpha}}Bel_{\tmu_x}(A) \le P_{Y|\theta_0}(A)\right\}\right) \ge 1-\alpha,
\end{equation}
i.e., ${^{c_\alpha}}Bel_{\tmu_x}$ dominates the true probability distribution $P_{Y|\theta_0}$ of $Y$ for a proportion of the samples at least equal to $1-\alpha$. Function ${^{c_\alpha}}Bel_{\tmu_x}$ is, thus, a predictive belief function at confidence level $1-\alpha$, according to the terminology introduced in \cite{denoeux06b}.

As a numerical example, assume that we have observed $x=28$ black balls out of $n=100$ draws with replacement from an urn with $N=100$ balls, and we wish to predict the number $Y$ of black balls in $r=4$ future draws from the same urn. The belief and plausibility of any event $A\subseteq \Omega_Y$ can be approximated by Monte Carlo simulation. More precisely, we generate $K$ realizations  $\bu_1,\ldots,\bu_K$ of $\bU$, and we approximate  $Pl_{\tmu_x}(A)$ and $Bel_{\tmu_x}(A)$ and, for any $A\subseteq \Omega_Y$ as, respectively
\[
Pl_{\tmu_x}(A) \approx \frac1K \sum_{k=1}^K \max_{\theta\in A} \tY_{x,\bu_k}(\theta)
\]
and
\[
Bel_{\tmu_x}(A) \approx 1-\frac1K \sum_{k=1}^K \max_{\theta\not\in A} \tY_{x,\bu_k}(\theta).
\]
In particular, the contour function is approximated as
\[
pl_{\tmu_x}(\theta) \approx \frac1K \sum_{k=1}^K  \tY_{x,\bu_k}(\theta)
\]
Figure \ref{fig:contour_Y} shows a plot of the contour function $pl_{\tmu_x}$ together with the plug-in PMF $p_{Y|\thetah_x}$, with $\thetah_x=x/n=0.28$. Figure \ref{fig:cdfs_Y} displays the lower and upper cumulative distribution functions (CDFs) defined, respectively, as $F_*(y)=Bel_{\tmu_x}((-\infty,y])$ and $F^*(y)=Pl_{\tmu_x}((-\infty,y])$, together with the plug-in CDF $F_{Y|\thetah_x}(y)=P_{Y|\thetah}((-\infty,y])$. We can remark that, as $\tL_x(\thetah_x)=1$, we have
\[
Bel_{\tmu_x}(A) \le P_{Y|\thetah}(A) \le Pl_{\tmu_x}(A) 
\]
for all $A\subseteq\Omega_Y$, which is verified in Figure \ref{fig:cdfs_Y} for $A=(-\infty,y]$ with $y\in \Omega_Y$. 

\begin{figure}
\centering  
\subfloat[\label{fig:contour_Y}]{\includegraphics[width=0.5\textwidth]{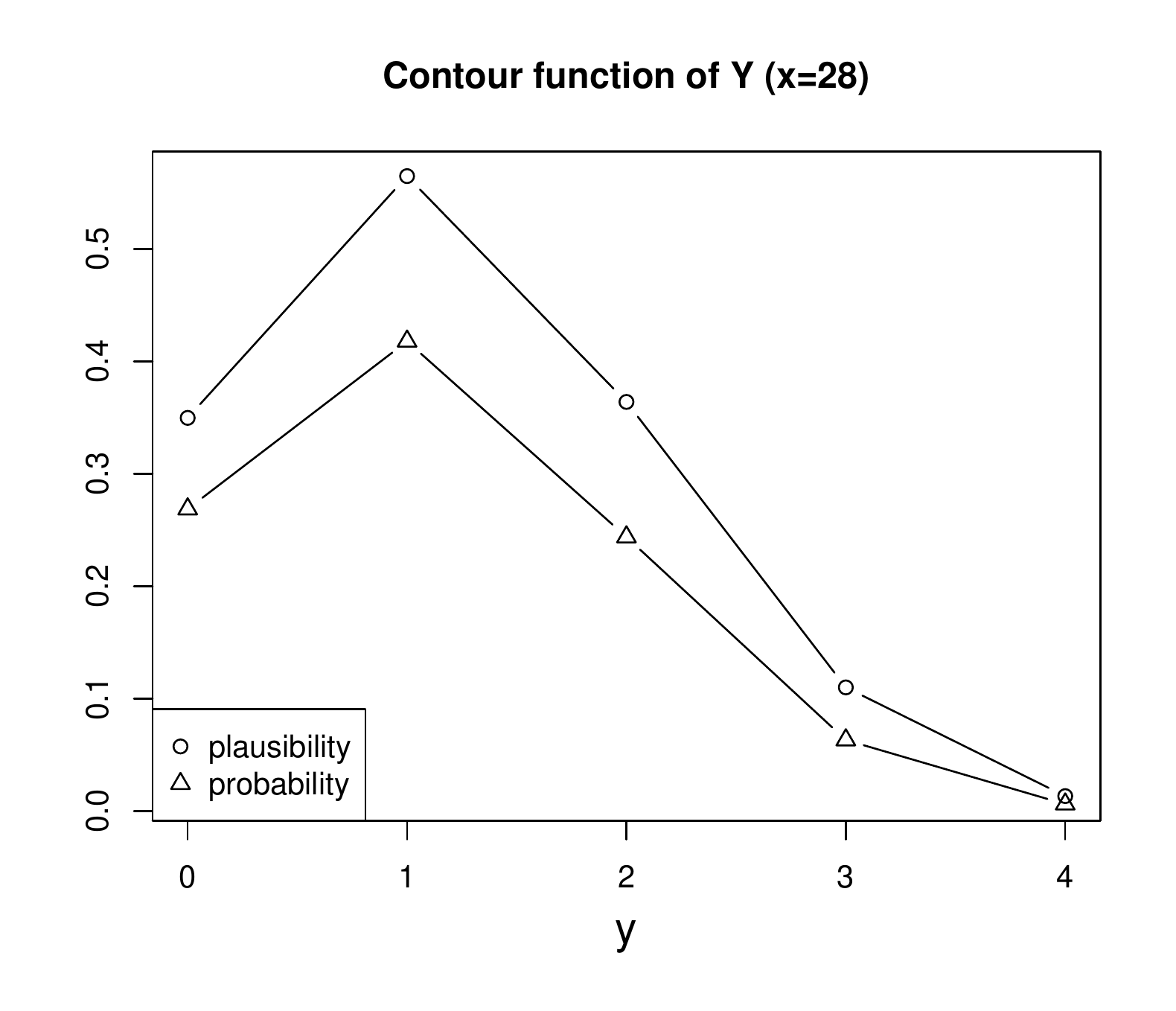}}
\subfloat[\label{fig:cdfs_Y}]{\includegraphics[width=0.5\textwidth]{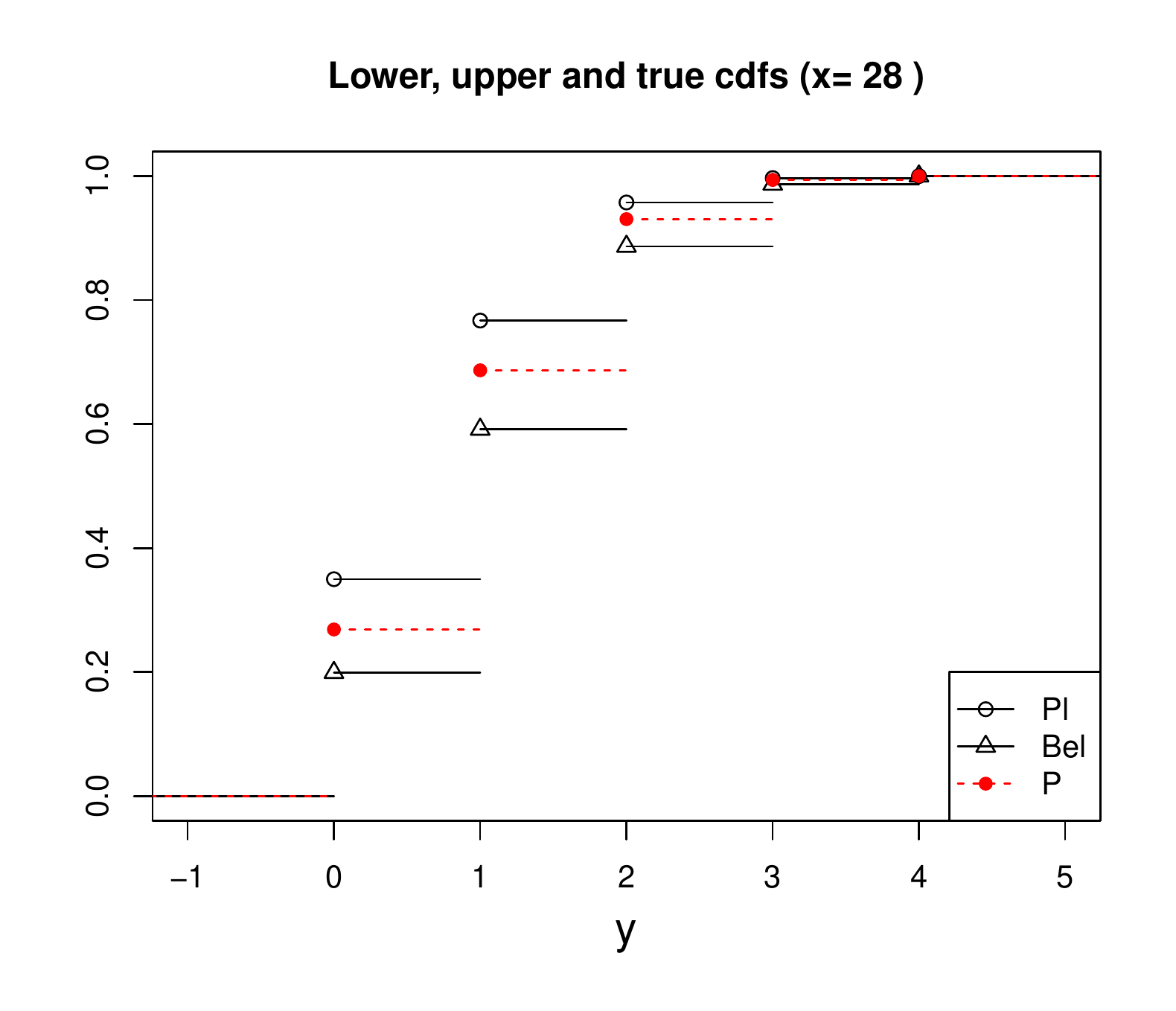}}
\caption{Views of the predictive belief function for $n=100$, $x=28$, $N=100$ and $r=4$. (a): Predictive contour function $pl_{\tmu_x}$ and true probability mass function $p_{Y|\theta_0}$ (with $\theta_0=0.3$); (b): Lower, upper and true CDFs. }
\end{figure}

To check the frequentist property \eqref{eq:predictive_bf} experimentally, we computed the coverage probability on the left-hand side of \eqref{eq:predictive_bf} for  $\alpha\in \{0.01,0.05,0.1\}$. In this expression, ${^{c_\alpha}}Bel_{\tmu_x}(A)$ can be approximated as
\[
{^{c_\alpha}}Bel_{\tmu_x}(A) \approx \frac1K \sum_{k=1}^K I( {^{c_\alpha}}\tY_{x,\bu_k} \subseteq A).
\]
We set $K=15,000$ in this simulation. The estimated  probabilities are shown in Table \ref{tab:cov_pred}. They are  close to the nominal values $1-\alpha$.

\begin{table}
\begin{center}
\caption{Estimated coverage probabilities of predictive belief functions  ${^{c_\alpha}}Bel_{\tmu_x}$  $\alpha=0.01$, $\alpha=0.05$ and $\alpha=0.1$, with $\theta_0=0.3$. \label{tab:cov_pred}}
\begin{tabular}{cccc}
\hline
$1-\alpha$ & 0.99 & 0.95 & 0.90 \\
\hline
Coverage probability & 0.9907 & 0.9496 & 0.8913\\
\hline
\end{tabular}
\end{center}
\end{table}

Figure \ref{fig:Pl_P_x4} shows the belief degrees ${^{c_\alpha}}Bel_{\tmu_x}(A)$ against the true probabilities $P_{Y|\theta_0}(A)$ for the $2^5=32$ subsets of $\Omega_Y$ and 9 values of $x$, with $\alpha=0.05$. We can verify that we have, for the most probable values of $x$ ($x\in \{22, 26, 30, 34, 38\}$), ${^{c_\alpha}}Bel_{\tmu_x}(A) \le P_{Y|\theta_0}(A)$ for all $A$.

\begin{figure}
\centering  
\includegraphics[width=\textwidth]{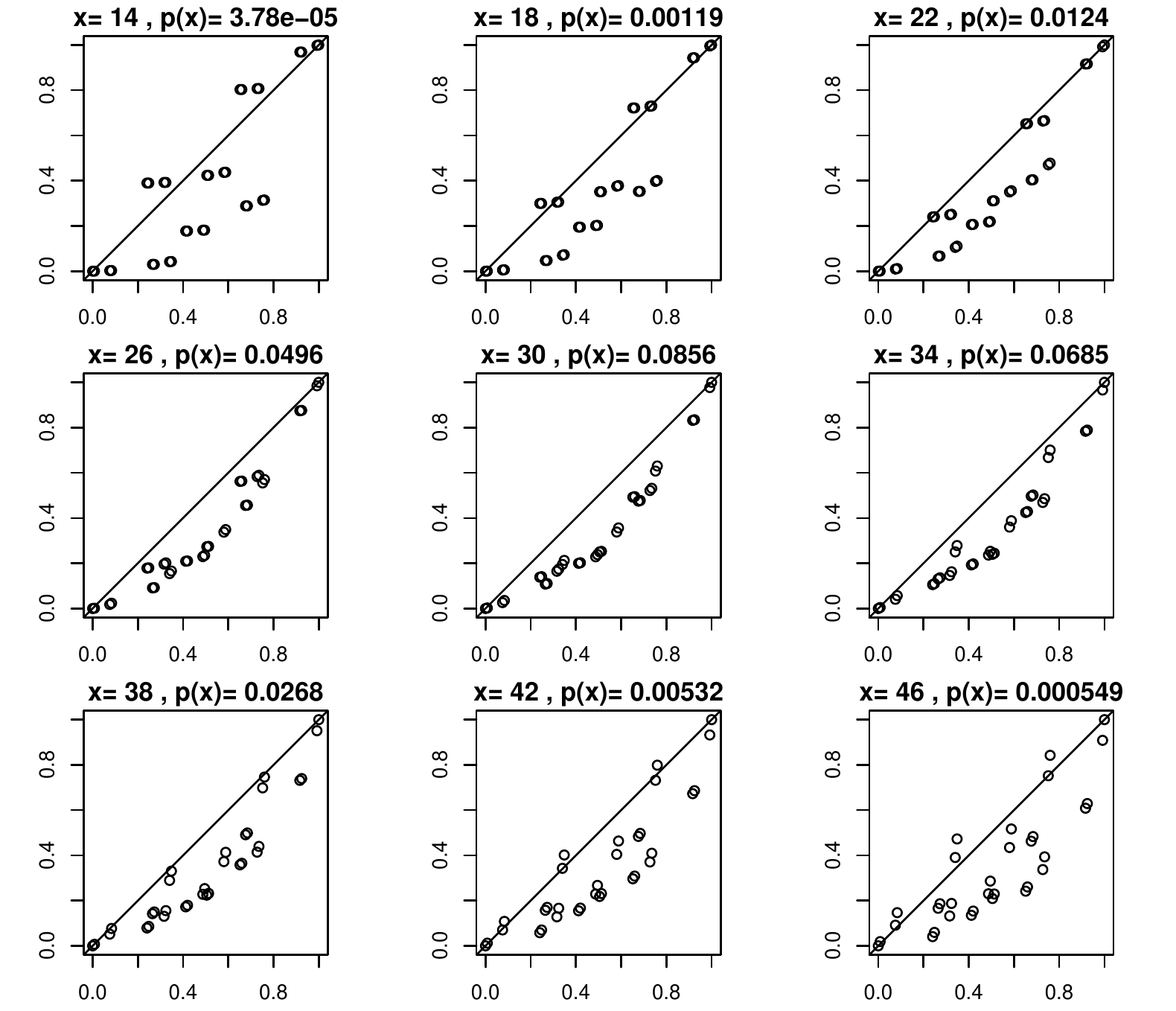}
\caption{Degrees of belief ${^{c_\alpha}}Bel_{\tmu_x}(A)$ (vertical axes) vs.  true probabilities $P_{Y|\theta_0}(A)$ (horizontal axes) or the $2^5=32$ subsets of $\Omega_Y=\{0,\ldots,4\}$ and 9 values of $x$, with $\alpha=0.05$.  \label{fig:Pl_P_x4}}
\end{figure}

\paragraph{Probability of fuzzy events} The formalism exposed in Section \ref{subsec:belplfuzzy} allows us to compute the belief and plausibility degrees of fuzzy events. For example, consider the fuzzy event: ``Most balls will be black'', where ``most'' corresponds to the following fuzzy subset of $\Omega_Y$:
\[
\tA:= \left\{\frac{0}{0}, \frac{1}{0}, \frac{2}{0.5}, \frac{3}{0.75}, \frac{4}{1}\right\}.
\]
Using  Zadeh's definitions \eqref{eq:sugeno} for the belief and plausibility of  fuzzy event based on the Sugeno integral, we get
\[
Bel^{(S)}_{\tmu_x}(\tA) \approx \frac1K \sum_{k=1}^K \min_{y\in \Omega_Y} (\tA \vee \tY_{x,\bu_k}^c)(y)
\]
and
\[
Pl^{(S)}_{\tmu_x}(\tA) \approx \frac1K \sum_{k=1}^K \max_{y\in \Omega_Y} (\tA \wedge \tY_{x,\bu_k})(y).
\]
For $x=28$, we find $Bel^{(S)}_{\tmu_x}(\tA) \approx 0.143$ and $Pl^{(S)}_{\tmu_x}(\tA) \approx 0.253$. Using definitions \eqref{eq:choquet} based on the Choquet integral, we have
\[
Bel^{(C)}_{\tmu_x}(\tA) = 0.5 Bel^{(C)}_{\tmu_x}(\{2,3,4\})+0.25 Bel^{(C)}_{\tmu_x}(\{3,4\})+0.25 Bel^{(C)}_{\tmu_x}(\{4\}) \approx 0.128
\]
and
\[
Pl^{(C)}_{\tmu_x}(\tA) = 0.5 Pl^{(C)}_{\tmu_x}(\{2,3,4\})+0.25 Pl^{(C)}_{\tmu_x}(\{3,4\})+0.25 Pl^{(C)}_{\tmu_x}(\{4\}) \approx 0.236.
\]

\section{Conclusion}
\label{sec:concl}

We have revisited Zadeh's notion of ``evidence of the second kind'' and outlined it importance as a generalization of both DS and possibility theories. According to this general view, possibility theory deals with belief functions generated by fuzzy sets, while DS theory deals with belief functions generated by random sets (or, equivalently, mass functions in the finite setting). Both theories fall under the umbrella of a more general formalism dealing with belief functions induced by random fuzzy sets, or fuzzy mass functions in the discrete case. This general formalism can be called ``Epistemic random fuzzy set theory'' (Figure \ref{fig:theories}). The adjective ``epistemic'' reminds us that, in this approach, a random fuzzy set is not seen as a model of a random mechanism for generating fuzzy data, but as a representation of evidence that can be both uncertain and fuzzy. 

\begin{figure}
\centering  
\includegraphics[width=0.5\textwidth]{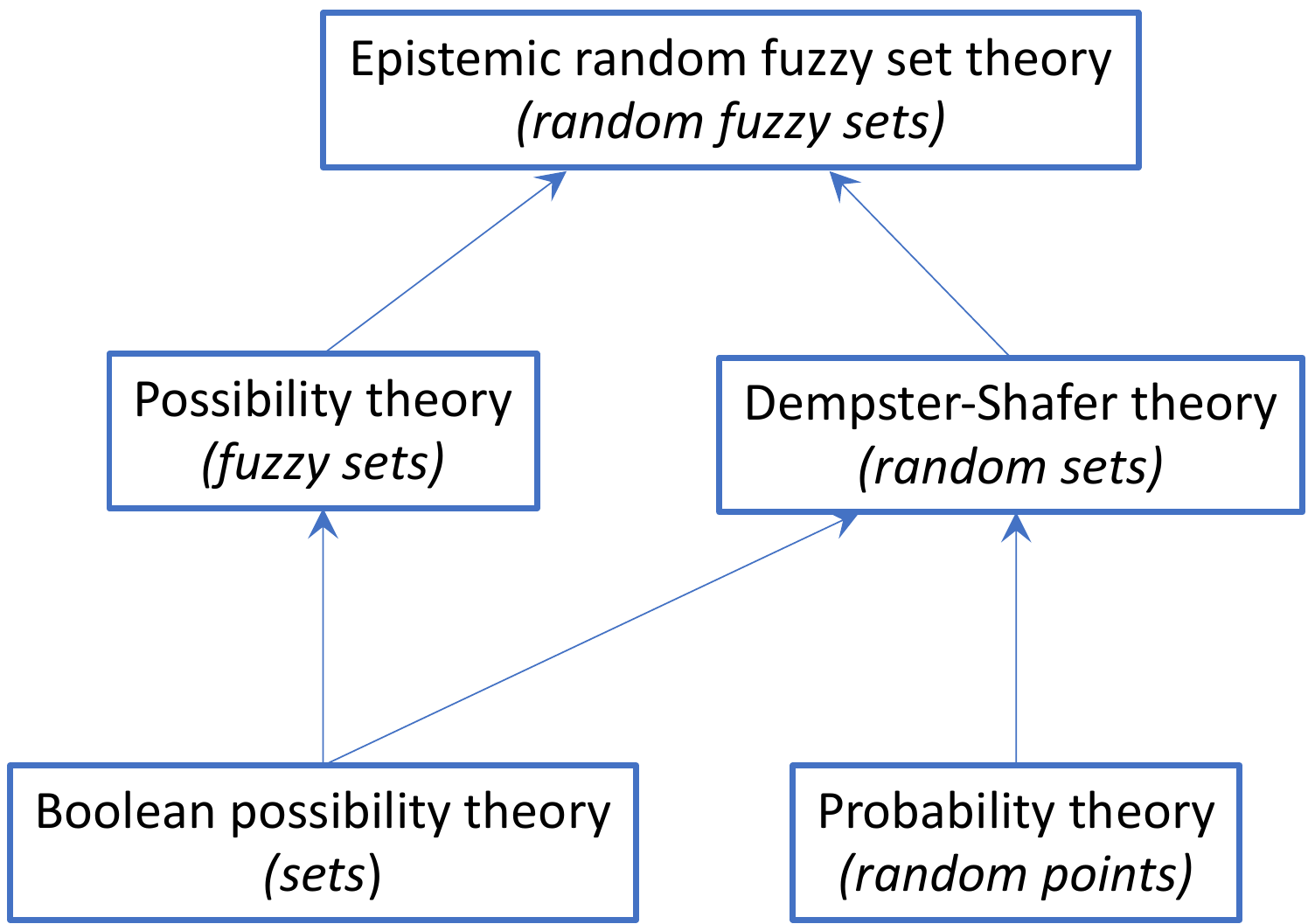}
\caption{Overview of different theories of uncertainty, from the less general ones (bottom) to the most general (top). \label{fig:theories}}
\end{figure}

In this general approach, a belief function is only a partial representation of evidence, which is  fully represented by the underlying random fuzzy set. In particular, two independent pieces of evidence must be combined at the random-fuzzy-set level. For instance, two possibility measures (or consonant belief functions) should be combined using the  normalized product rule of possibility theory if they are generated by fuzzy sets, and by  Dempster's rule is they are generated by random sets. These two combination mechanisms are special cases of a generalized product-intersection rule, which allows us to combine independent pieces of evidence that are both uncertain and fuzzy. 

As an application of this general framework, we have considered statistical inference. Shafer \cite{shafer76} first proposed to represent statistical evidence by the consonant belief function such that the plausibility of each single parameter value equals its relative likelihood. However, this representation was soon found to be problematic, as the belief function induced by the union of two independent samples is not equal to the orthogonal sum of the belief functions induced by each of the samples. This problem actually occurs only if we see the likelihood-based belief function as being generated by a crisp consonant mass function, i.e., if we see statistical evidence as  uncertain. The problem disappears if we see statistical evidence as fuzzy,  the relative likelihood function being interpreted as the membership function of the fuzzy set of likely values of the parameter. This perspective is consistent with the intuition of several authors who noticed the possibilistic nature of the concept of likelihood (see, e.g., \cite{smets82}). Possibility theory alone, however, is not expressive enough to support the development of a theory of statistical inference as it does not allow us to represent, for instance, the combination of the likelihood with a Bayesian prior. The more general epistemic random fuzzy set theory allows us to reconcile the possibilistic interpretation of likelihood with Bayesian inference, and  is rich enough to accommodate both ``likelihoodist'' and Bayesian views of statistical inference.

Throughout this paper, we have only considered finite sets to keep the level of mathematical exposition elementary. However, the whole framework developed in this paper can be extended to infinite spaces such as $\reels^n$ without much difficulty, using existing results on random fuzzy sets in a more general setting (see, e.g., \cite{couso11} and references therein). The practical application of such models to statistical inference or other uncertainty quantification problems will require the use of Monte Carlo simulation and numerical approximation techniques such as developed in \cite{ann16, sui18, denoeux19e}. Results in this direction will be reported in future publications.

\section*{References}

\end{document}